\newtheorem{definition}{Definition}
\newtheorem{assumption}{Assumption}
\newtheorem{theorem}{Theorem}
\newtheorem{lemma}[theorem]{Lemma}
\newtheorem{remark}[theorem]{Remark}
\newcommand{\Mat}{\boldsymbol}
\newcommand{\Set}{\mathcal}
\newcommand{\real}{\mathbb{R}}
\newcommand{\Prob}{\mathbb{P}}
\newcommand{\mean}{\mathbb{E}}
\newcommand{\radem}{\mathcal{R}}
\newcommand{\LS}{L_{\Set{S}}}
\newcommand{\LD}{L_{\Set{D}}}
\newcommand{\LSp}{L_{\Set{S^\prime}}}
\newcommand\ie{\textit{i.e.}}
\DeclareMathOperator*{\argmax}{arg\,max}
\definecolor{codegreen}{rgb}{0,0.6,0}
\definecolor{codegray}{rgb}{0.5,0.5,0.5}
\definecolor{codepurple}{rgb}{0.58,0,0.82}
\definecolor{backcolour}{rgb}{0.95,0.95,0.92}
\definecolor{lightgray}{rgb}{0.95,0.95,0.95}
\lstdefinestyle{pythonstyle}{
    language=Python,
    basicstyle=\ttfamily\small,
    keywordstyle=\color{blue},
    commentstyle=\color{green},
    stringstyle=\color{red},
    numbers=left,
    numberstyle=\tiny\color{gray},
    stepnumber=1,
    numbersep=10pt,
    showstringspaces=false,
    breaklines=true,
    frame=single,
    captionpos=b
}
\definecolor{darkblue}{rgb}{0, 0, 0.5}
\title{Sparse Mixture-of-Experts for Compositional Generalization: Empirical Evidence and Theoretical Foundations of Optimal Sparsity}
\author{Jinze Zhao$^{1}$, Peihao Wang$^1$, Junjie Yang$^2$, Ruisi Cai$^1$, Gaowen Liu$^3$,\\ \textbf{Jayanth Srinivasa$^3$, Ramana Rao Kompella$^3$, Yingbin Liang$^2$, Zhangyang Wang$^1$} \\
  $^1$University of Texas at Austin \\
  $^2$Ohio State University \\
  $^3$Cisco Research\\
  \texttt{\{jz24694, peihaowang\}@utexas.edu}}
\begin{document}

\ifcolmsubmission
\linenumbers
\fi

\maketitle

\begin{abstract}
Sparse Mixture-of-Experts (SMoE) architectures have gained prominence for their ability to scale neural networks, particularly transformers, without a proportional increase in computational cost. Despite their success, their role in compositional generalization, i.e., adapting to novel combinations of known components, remains under-explored. This study challenges the assumption that minimal expert activation suffices for task generalization and investigates the relationship between task complexity and optimal sparsity in SMoE models. Through empirical evaluations on the SRAVEN symbolic reasoning task and the SKILL-MIX benchmark, we demonstrate that (i) the number of activated experts consistently increases with the perceived task difficulty to maintain performance; and (ii) the optimal number of activated experts scales proportionally with task complexity. Our theoretical analysis derives a scaling law for optimal sparsity by balancing approximation and estimation errors, revealing alignment with empirical observations. We formally show that the optimal sparsity lies between minimal activation (1-2 experts) and full activation, with the exact number scaling proportionally to task complexity and further influenced by the size of the training data and the complexity of the model. These findings offer practical insights for designing SMoE models that achieve computational efficiency while enabling robust compositional generalization.
\end{abstract}

\vspace{-1em}
\section{Introduction}\label{sec:introduction}
\vspace{-1em}
The Sparse Mixture of Experts (SMoE) model, introduced to modern deep learning first by~\citet{shazeer2017outrageously}, has emerged as a compelling approach for scaling neural networks, particularly transformers, by significantly increasing model size without proportionally increasing computational costs. SMoE achieves this by partitioning the traditional feed-forward network into multiple homogeneous expert networks, dynamically and sparsely activated by a router. This modular architecture not only optimizes computational efficiency but also enhances generalization capabilities, especially in diverse data domains~\citep{mittal2022modular}. SMoE has become a standard architecture for many Large Language Models (LLMs) due to its superior performance and scalability~\citep{jiang2024mixtral,Grok-1,DBRX,dai2024deepseekmoeultimateexpertspecialization,qwen_moe,Samba-CoE-v0.3,lieber2024jambahybridtransformermambalanguage}.

Despite these advancements, the application of SMoE models to \textbf{compositional generalization} remains underexplored. Compositional generalization tasks require models to solve problems involving novel combinations of familiar components, with difficulty growing exponentially as the design space of possible combinations expands. For instance, in symbolic reasoning, a model may learn to solve individual arithmetic operations like addition and multiplication during training but must generalize to unseen combinations of these operations, such as nested expressions like \((3 + 5) \times 2\), which were not explicitly encountered before. The difficulty scales exponentially as the number of components and their interactions increase.
 Conventional SMoE configurations typically use minimal expert activation at some fixed, \textit{de-facto} sparsity (e.g., Top 1 or 2 out of 8 experts), a design that may become suboptimal for handling tasks of growing composition complexity. This raises a critical question: \textbf{Does the \textit{de-facto} activation sparsity  remain optimal as task complexity increases in compositional settings?}

To address this, we conduct both theoretical and empirical investigations, demonstrating that the optimal sparsity level for SMoE models scales proportionally with task complexity. Our contributions are summarized as follows:

\begin{itemize}
    \item We first trained SMoE-based transformers from scratch on the SRAVEN~\citep{schug2024attentionhypernetwork} synthetic symbolic reasoning task, varying task difficulty and the number of activated experts. Our findings reveal that activating more experts improves both Out-of-Distribution (OOD) generalization and test accuracy on harder tasks, with the optimal number of activated experts scaling roughly with the task's feature complexity.
    
    \item We then evaluated two pretrained SMoE-based LLMs, Mixtral-8$\times$7B~\citep{jiang2024mixtral} and DBRX-132B Instruct~\citep{DBRX}, on the SKILL-MIX benchmark~\citep{yu2023skill}, which challenges models to generate coherent text that integrates $k$ linguistic skills. Results show that activating more experts-per-token notably improves performance on harder tasks without additional training.

    \item Built on consistent empirical observations, we derived a theoretical scaling law for optimal sparsity in SMoE models under compositional input data settings, under simplified assumptions. Our analysis reveals that the optimal sparsity lies between minimal activation (e.g., 1-2 experts) and full activation of all experts, with the precise number depending on task difficulty, the size of the training data, and the complexity of the model.
\end{itemize}
Our work challenges the prevailing assumption that minimal expert activation is sufficient for complex tasks. By identifying the relationship between task difficulty and optimal sparsity, we provide actionable insights into designing SMoE models that balance computational efficiency with robust compositional generalization.
\vspace{-1em}
\section{Related Works}\label{sec:supp_relatedwork}
\vspace{-1em}
\paragraph{Compositional Generalization} 
Compositional generalization refers to a system's ability to understand and generate novel combinations of a finite set of familiar elements~\citep{fodor1988connectionism,chomsky2014aspects}. This capability is essential for enabling efficient learning and achieving robust generalization across diverse domains. In the computer vision field, studies have focused on generating images from new concept combinations, often using disentangled representation learning~\citep{esmaeili2018structureddisentangledrepresentations}. Researchers have evaluated VAE-based generative models in compositional tasks~\cite{zhao2018bias,montero2020role}, exploring the relationship between disentanglement and generalization performance in image reconstruction and generation. Recent studies have made significant strides by conducting a controlled investigation of compositional generalization in conditional diffusion models~\citep{okawa2024compositional,du2023reduce,leivada2023dall}, revealing insights into the emergence of compositional abilities and the factors influencing out-of-distribution generation. 

Recent works have observed emergent compositional capabilities in LLMs~\citep{wei2022emergent,yu2023skill,schaeffer2023emergentabilitieslargelanguage}. Several evaluation methods have been proposed to quantify compositional generalization of large, \textit{monolithic} pre-trained LLMs, including imposing generation constraints~\citep{yu2023skill}, multi-hop question answering~\citep{dziri2024faith}, and elementary math operations~\citep{lee2023teachingarithmeticsmalltransformers}. Theoretical advancements have also shed light on the conditions required for achieving compositional generalization in neural networks~\citep{schug2024attentionhypernetwork,schug2023discovering,arora2023theory}. More recently, \citet{huang2024hardertasksneedexperts} finds out that harder tasks need more experts and then proposed a heterogeneous routing strategy, and \citet{abnar2025parametersvsflopsscaling} investigates the sparsity scaling-law of SMoE, though both from non-compositional settings and without theoretical analysis. Despite substantial progress, there is a significant gap in understanding \textit{compositional generalization within modular architectures}, particularly SMoEs. Our work pioneers this investigation by exploring whether \textit{de-facto} sparse activation remains optimal as compositional task difficulty increases from both theoretical and experimental perspectives.

\paragraph{Theoretical Understanding of Mixture-of-Experts}
Recent work~\citep{chen2022towards} formally studied how the SMoE layer reduces training error more effectively than a single expert and why such a mixture model does not collapse into a single model. Importantly, when training an SMoE layer on data generated from a 'mixture of class' distribution using gradient descent, the authors proved that each expert in the SMoE model specializes in a specific portion of the data (at least one cluster), while the router learns the cluster-center features and routes inputs to the appropriate experts. Subsequently, a series of studies~\citep{nguyen2023convergence,nguyen2023demystifying,nguyen2023general,nguyen2023statistical,nguyen2024sigmoid} sought to establish the convergence rates of density estimation and parameter estimation in MoE models by defining Voronoi-based losses that describe the interaction between the gating function and experts, explaining why Top-1 gating enables faster convergence rates for parameter estimation compared to other gating mechanisms. More recently, \citet{jelassi2024mixture} theoretically justified that while the memorization performance consistently improves with an increasing number of experts in SMoE, reasoning capabilities tend to saturate. Meanwhile, \citet{akretche2024tighter} provided tighter risk bounds by incorporating local differential privacy into the gating mechanism to enhance the generalization ability of MoE. In contrast to those prior arts, we aim to formally study whether the conventional sparse activation of SMoE remains an optimal strategy for compositional tasks of increasing difficulty, using both theoretical and empirical approaches.

\vspace{-1em}
\section{Empirical Study}
\vspace{-1em}
We postpone the introduction of the used compositional tasks to~\Cref{sec:supp_more_related} due to page limit. We conduct two sets of experiments to investigate whether sparse expert activation remains optimal for handling compositional tasks of varying difficulty, in both training-from-scratch and testing pre-trained model settings. In training-from-scratch experiments, we trained and tested SMoE under varying experts sparsity levels and compositional task difficulty levels. We test pretrained SMoE-based LLMs with varying experts sparsity levels and compositional task difficulty levels since they are pre-trained under a fixed sparsity level, \textit{e.g.}, Top-2 routing.
Both set of experiments indicate that the \textit{de-facto} activate sparsity is non-optimal as the task complexity increases. 
\vspace{-1em}
\subsection{Training and Evaluting SMoE-based Transformers on SRAVEN}
\vspace{-1em}
We trained standard decoder-only SMoE-based transformers on SRAVEN synthetic tasks~\citep{schug2024attentionhypernetwork}. Each transformer block consists of multi-head attention 
with relative positional encoding~\citep{raffel2023exploringlimitstransferlearning} and feedforward layer (FFN). The feedforward layer in each block is an SMoE structure with 8 parallel homogeneous experts, where each expert is a 2-layer multi-layer perceptron (MLP). The router is a simple 1-layer dense layer with top-K softmax gating mechanism. For the SRAVEN hyperparameters, we fixed the grid size of the problem to be $3\times3$, which means that we fix the number of in-context examples for the model. We have $R=8$ possible rules to sample from. Following~\citet{schug2024attentionhypernetwork}, we split all possible rule combinations at difficulty level $M$ into training and testing set, also hold out 25\% of all possible combinations as OOD evaluation set. We can adjust the difficulty of the task by sampling $M\in \{1,2,\cdots,R\}$ different rules to compose the task. 
\vspace{-1em}
\subsubsection{Sparse activation levels are not consistently optimal across all difficulty settings}\label{sec:main}
\vspace{-1em}
We trained SMoE transformers with various Top-K routing mechanisms for the same number of iterations across different difficulty levels of the SRAVEN task, keeping all other model and training hyperparameters constant. Surprisingly, Top-1 routing consistently achieved the worst Out-of-Distribution (OOD) accuracy across all difficulty levels. While Top-2 routing performed comparably to other more expensive routing mechanisms on easier tasks, its performance also declined significantly as the compositional task complexity increased, as shown in~\Cref{fig:moe_softmaxatt_ood}. Similar trends were observed in Test Accuracy evaluations, detailed in~\Cref{fig:moe_softmaxatt_test} and~\Cref{sec:supp_results}. These findings demonstrate that commonly-used ``de-facto" sparse activation mechanisms are suboptimal for learning compositional tasks, challenging previous conclusions from studies such as~\citet{shazeer2017outrageously,jiang2024mixtral}.

\begin{figure}[h]
\vspace{-10mm}
    \centering
    \includegraphics[width=1\linewidth]{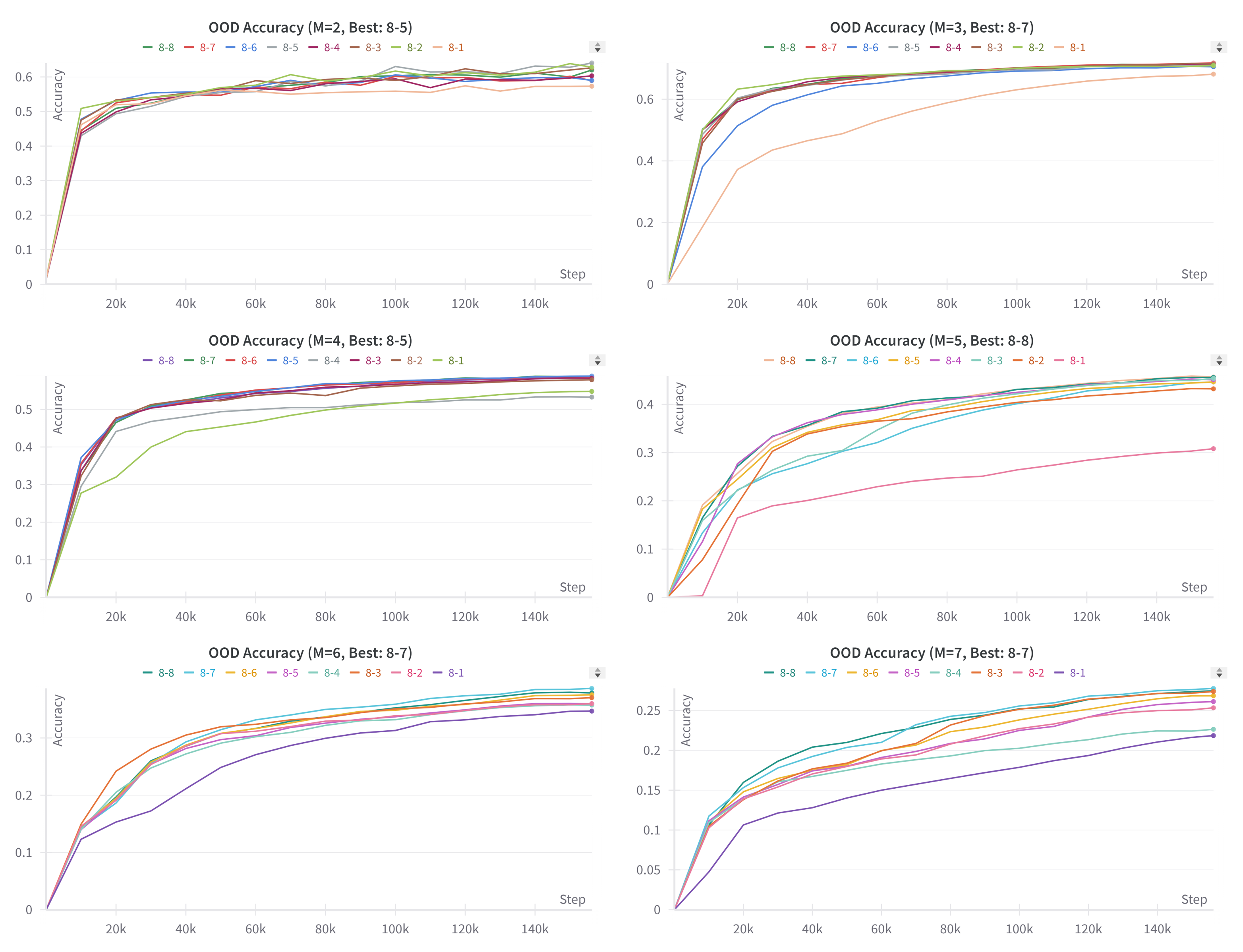}
    \caption{OOD accuracy of training SMoE Transformer, where the difficulty level of the task (i.e., the number of sampled rules $M$) increases. ''8-$k$'' means activating $k$ out of 8 experts on every FFN layer. The best-performing activation mechanism is labeled on the caption of each figure. }
    \label{fig:moe_softmaxatt_ood}
    \vspace{-5mm}
\end{figure}

\vspace{-1em}
\subsubsection{The optimal number of activated experts roughly scales with task difficulty}
\vspace{-1em}
We also observe that as we increase the compositional task difficulty, \ie, the number of sampled rules $M$, more activated experts are required to obtain the optimal performance correspondingly. Starting from the setting where $M=4$, the optimal number of activated experts $K$ is roughly scaled to $M$, and the performance gap between each activation mechanism also widens, as shown in~\Cref{fig:moe_softmaxatt_ood} and~\Cref{fig:moe_softmaxatt_test}. Therefore, we hypothesize that each expert can specialize in specific rules when the model is trained on compositional tasks, a phenomenon not typically observed in traditional NLP training tasks~\citep{fan2024towards}.

We also conducted ablation studies by switching from Softmax attention to HYLA attention~\citep{schug2024attentionhypernetwork}, a novel mechanism that encourages compositional generalization ability, and observed similar results as shown in~\Cref{sec:supp_ablation}. These observations collectively suggest that the choice of attention mechanism and the number of activated experts must be dynamically adjusted based on task difficulty. Specifically, HYLA attention coupled with increased expert activation is promising for robust compositional generalization in challenging tasks, while simpler tasks may not benefit significantly from these adjustments.

\vspace{-1em}
\subsection{Evaluating Pre-trained SMoE-based LLMs on Skill-Mix}
\vspace{-1em}
We evaluate two instruction-tuned SMoE-based LLMs, \ie, Mixtral-8$\times$7B Instruct-v0.1~\citep{jiang2024mixtral} and DBRX-132B Instruct~\citep{DBRX}, on the Skill-Mix~\citet{yu2023skill} benchmark. These evaluations utilize pre-trained models, meaning the training-time number of experts-per-token is already fixed and remains unchanged. Therefore, our experiments test the impact of varying the inference-time experts-per-token, essentially exploring the Out-of-Distribution (OOD) generalization of the number of active experts during the test phase. To ensure consistency in evaluation, we adopt GPT-4~\citep{openai2024gpt4technicalreport} as the grading model and use the released $10\%$ of the skill and topic lists from Section A of~\citet{yu2023skill}. The same optimized generation prompts are used for querying Mixtral and DBRX, and identical grading prompts and evaluation metrics are used to query GPT-4 for grading, as outlined in~\citet{yu2023skill}.

\vspace{-1em}
\subsubsection{More Experts-per-token Improves Compositional Generalization on Harder Tasks}
\vspace{-1em}
We evaluate Mixtral-8$\times$7B Instruct-v0.1 on Skill-Mix by varying both the task difficulty $k$ (i.e., the number of skills the model must combine in its generated response) and the number of experts-per-token (ept) during inference. Similar patterns are observed in the DBRX evaluation, as summarized in~\Cref{tab:skill-mix-evaluation} and~\Cref{tab:DBRX_skillmix}. Key findings include:

\begin{itemize}
    \item \textbf{Scaling Experts with Task Difficulty:} As the difficulty of the Skill-Mix task increases, activating more experts per token during inference is essential for performance. For example, the generated outputs from the default Top-2 routing setting score zero on all grading metrics when $k=4$, indicating that two experts are insufficient for such complex compositional tasks. In contrast, activating 4 or 5 experts achieves the best performance for $k=4$.
    
    \item \textbf{Optimal Experts Scale with Task Complexity:} The optimal number of experts per token scales with the task difficulty $k$. For simpler tasks (e.g., $k=1$ or $k=2$), a smaller number of experts (e.g., Top-2) suffices, while harder tasks (e.g., $k=4$) require more experts (e.g., 4 or 5 experts per token) to maintain compositional generalization.

    \item \textbf{Over-activation Can Harm Simpler Tasks:} Unlike the SRAVEN training experiments, which showed that increasing the number of activated experts does not degrade performance on simpler compositional tasks, the Skill-Mix evaluation of pre-trained LLMs highlights a downside of over-activation during inference. For instance, while the default Top-2 routing achieves perfect scores for $k=1$ and $k=2$ tasks, activating 7 or 8 experts during inference significantly lowers the model's performance on these simpler tasks, likely due to unnecessary complexity introduced by activating more experts than required.

\end{itemize}
\vspace{-1em}
These findings demonstrate that the optimal number of experts-per-token during inference is task-dependent, with harder tasks requiring more active experts for better compositional generalization. However, overactivating experts can negatively impact simpler tasks, highlighting the importance of dynamically adapting the number of experts based on task complexity. This insight emphasizes the need to treat training-time and inference-time sparsity settings separately to achieve robust performance across a range of task difficulties.

\begin{table}[htbp]
\centering
\resizebox{0.8\textwidth}{!}{%
\begin{tabular}{c|ccccc}
\toprule
Skill-Mix results for Mixtral-8$\times$7B evaluated by GPT4 & $k$ = 1 & $k$ = 2 & $k$ = 3 & $k$ = 4 & $k$ = 5 \\
\midrule
\multirow{3}{*}{EPT=1} & 0.00 $\pm$ 0.000 & 0.00 $\pm$ 0.000 & 0.00 $\pm$ 0.000 & 0.00 $\pm$ 0.000 & 0.00 $\pm$ 0.000 \\
 & 0.00 $\pm$ 0.000 & 0.00 $\pm$ 0.000 & 0.00 $\pm$ 0.000 & 0.00 $\pm$ 0.000 & 0.00 $\pm$ 0.000 \\
 & 0.00 $\pm$ 0.000 & 0.00 $\pm$ 0.000 & 0.20 $\pm$ 0.082 & 0.35 $\pm$ 0.100 & 0.00 $\pm$ 0.000 \\
\midrule
\multirow{3}{*}{EPT=2 (default)} & \textbf{1.00 $\pm$ 0.000} & \textbf{1.00 $\pm$ 0.000} & 0.00 $\pm$ 0.000 & 0.00 $\pm$ 0.000 & 0.00 $\pm$ 0.000 \\
 & \textbf{1.00 $\pm$ 0.000} & \textbf{1.00 $\pm$ 0.000} & \textbf{1.00 $\pm$ 0.000} & 0.00 $\pm$ 0.000 & 0.00 $\pm$ 0.000 \\
 & \textbf{1.00 $\pm$ 0.000} & \textbf{1.00 $\pm$ 0.000} & 0.00 $\pm$ 0.000 & 0.00 $\pm$ 0.000 & 0.00 $\pm$ 0.000 \\
\midrule
\multirow{3}{*}{EPT=3} & 0.00 $\pm$ 0.000 & 0.00 $\pm$ 0.000 & 0.00 $\pm$ 0.000 & 0.00 $\pm$ 0.000 & 0.00 $\pm$ 0.000 \\
 & 0.40 $\pm$ 0.245 & 0.20 $\pm$ 0.200 & 0.00 $\pm$ 0.000 & 0.00 $\pm$ 0.000 & 0.00 $\pm$ 0.000 \\
 & 0.00 $\pm$ 0.000 & 0.10 $\pm$ 0.100 & 0.47 $\pm$ 0.082 & 0.60 $\pm$ 0.061 & 0.00 $\pm$ 0.000 \\
\midrule
\multirow{3}{*}{EPT=4} & 0.20 $\pm$ 0.200 & 0.20 $\pm$ 0.200 & \textbf{0.20 $\pm$ 0.200} & \textbf{0.20 $\pm$ 0.200} & 0.00 $\pm$ 0.000 \\
 & 0.80 $\pm$ 0.200 & 0.40 $\pm$ 0.245 & \textbf{0.20 $\pm$ 0.200} & \textbf{0.20 $\pm$ 0.200} & \textbf{0.20 $\pm$ 0.200} \\
 & 0.20 $\pm$ 0.200 & 0.40 $\pm$ 0.187 & \textbf{0.67 $\pm$ 0.105} & \textbf{0.75 $\pm$ 0.079} & 0.00 $\pm$ 0.000 \\
\midrule
\multirow{3}{*}{EPT=5} & 0.20 $\pm$ 0.200 & 0.20 $\pm$ 0.200 & 0.20 $\pm$ 0.200 & \textbf{0.20 $\pm$ 0.200} & 0.00 $\pm$ 0.000 \\
 & 0.40 $\pm$ 0.245 & 0.20 $\pm$ 0.200 & 0.20 $\pm$ 0.200 & \textbf{0.20 $\pm$ 0.200} & \textbf{0.20 $\pm$ 0.200} \\
 & 0.20 $\pm$ 0.200 & 0.30 $\pm$ 0.200 & 0.53 $\pm$ 0.133 & \textbf{0.65 $\pm$ 0.100} & 0.00 $\pm$ 0.000 \\
\midrule
\multirow{3}{*}{EPT=6} & 0.00 $\pm$ 0.000 & 0.00 $\pm$ 0.000 & 0.00 $\pm$ 0.000 & 0.00 $\pm$ 0.000 & 0.00 $\pm$ 0.000 \\
 & 0.20 $\pm$ 0.200 & 0.00 $\pm$ 0.000 & 0.00 $\pm$ 0.000 & 0.00 $\pm$ 0.000 & 0.00 $\pm$ 0.000 \\
 & 0.00 $\pm$ 0.000 & 0.20 $\pm$ 0.122 & 0.47 $\pm$ 0.082 & 0.60 $\pm$ 0.061 & 0.00 $\pm$ 0.000 \\
\midrule
\multirow{3}{*}{EPT=7} & 0.00 $\pm$ 0.000 & 0.00 $\pm$ 0.000 & 0.00 $\pm$ 0.000 & 0.00 $\pm$ 0.000 & 0.00 $\pm$ 0.000 \\
 & 0.00 $\pm$ 0.000 & 0.00 $\pm$ 0.000 & 0.00 $\pm$ 0.000 & 0.00 $\pm$ 0.000 & 0.00 $\pm$ 0.000 \\
 & 0.00 $\pm$ 0.000 & 0.10 $\pm$ 0.100 & 0.40 $\pm$ 0.067 & 0.55 $\pm$ 0.050 & 0.00 $\pm$ 0.000 \\
\midrule
\multirow{3}{*}{EPT=8} & 0.00 $\pm$ 0.000 & 0.00 $\pm$ 0.000 & 0.00 $\pm$ 0.000 & 0.00 $\pm$ 0.000 & 0.00 $\pm$ 0.000 \\
 & 0.40 $\pm$ 0.245 & 0.20 $\pm$ 0.200 & 0.00 $\pm$ 0.000 & 0.00 $\pm$ 0.000 & 0.00 $\pm$ 0.000 \\
 & 0.00 $\pm$ 0.000 & 0.10 $\pm$ 0.100 & 0.47 $\pm$ 0.082 & 0.60 $\pm$ 0.061 & 0.00 $\pm$ 0.000 \\
\bottomrule
\end{tabular}%
}
\vspace{1em}
\caption{Skill-Mix Evaluation Results on Mixtral-8$\times$7B Instruct-v0.1~\cite{jiang2019fantastic}. The grading metrics are Ratio of Full Marks/Ratio of All Skills/Skill Fraction as defined in~\Cref{sec:supp_metrics}. 'EPT' stands for 'Number of experts per token.'}
\label{tab:skill-mix-evaluation}
\vspace{-4.5mm}
\end{table}
\vspace{-1em}

\section{Theoretical Analysis} \label{sec:theory}
\vspace{-1em}
In this section, we provide a novel theoretical analysis to justify our empirical findings.
We derive the generalization and approximation errors for SMoE trained on compositional tasks.
Based on this result, we show a theoretical trade-off between routing sparsity and task complexity, aligned with our empirical observations.
\vspace{-1em}
\subsection{Preliminaries} \label{sec:setting}
\vspace{-1em}
To begin with, we formally introduce the necessary mathematical setups for our successive analysis.
\vspace{-5mm}
\paragraph{Compositional Learning.}
Consider a data distribution: $\Set{D} \sim \Prob(\Mat{x}, y)$, where $\Mat{x} \in \real^d$ is an input feature vector and $y \in \Set{Y} \subseteq \real$ is the corresponding label. 
In compositional learning, we assume distribution $\Set{D}$ consists of $N$ tasks, and there are $N$ corresponding skills to solve each task, accordingly.
For each input $\Mat{x}$, we assume it can be represented by a subset of tasks and solved by corresponding skills, and its label $y$ is synthesized by combining the results of the corresponding skills.
Mathematically, we denote skills as a set of functions $\{t_1, \cdots, t_N\}$.
The data distribution is generated by $\Mat{x} \sim \Prob(\Mat{x})$, $y = G_{\Set{I}(\Mat{x})}(t_1(\Mat{x}), \cdots, t_N(\Mat{x}))$, where $\Set{I} \in 2^{[N]}$ indicates the task assignment of the input $\Mat{x}$, and $G_{\Set{I}}$ is the composition function according to $\Set{I}$.
We assume our training dataset $\Set{S}$ consists of $m$ i.i.d. samples drawn from the compositional data distribution: $\Set{S} = \{(\Mat{x}_1, y_1), \cdots, (\Mat{x}_m, y_m)\} \overset{i.i.d.}{\sim} \Set{D}^{m}$.
Given a parametric learner $f: \real^d \rightarrow \Set{Y}$, we optimize $f$ to minimize the empirical loss over the training set $\Set{S}$: $\LS(f) = \frac{1}{m} \sum_{i=1}^{m} \ell(f(\Mat{x}_i), y_i)$, where $\ell: \Set{Y} \times \Set{Y} \rightarrow \real$ is the element-wise loss function.
At the test stage, we consider the error within the whole data domain: $\LD(f) = \mean_{(\Mat{x}, y) \sim \Set{D}} \ell(f(\Mat{x}), y)$.
\vspace{-1em}
\paragraph{Sparse Mixture-of-Experts.}
We consider Sparse Mixture-of-Expert (SMoE) as the learner to the compositional task defined above.
SMoE can be defined as a data-dependent ensemble of many expert networks.
Suppose the total number of experts is $T$  and the number of dynamically activated experts is $k$.
Then SMoE can be written as a function $f: \real^d \rightarrow \Set{Y}$ defined as below:
\begin{align} \label{eqn:smoe}
f(\Mat{x}) = \sum_{j=1}^{T} a(\Mat{x})_j h_j(\Mat{x}) \quad \text{subject to } \sum_{j=1}^{T} \mathds{1}\{a(\Mat{x})_j \ne 0\} = k, \quad \forall \Mat{x} \in \real^{d},
\end{align}
where $a(\Mat{x}): \real^d \rightarrow \real^T$ is named as the routing function satisfying $\lVert a(\Mat{x}) \rVert_0 = k$, and $h_j(\Mat{x}): \real^d \rightarrow \Set{Y}$ is an expert network for every $j = 1, \cdots, T$ \footnote{For simplicity, We assume the (convex) linear combination of $\Set{Y}$ is still in $\Set{Y}$.}.
Intuitively, $a(\Mat{x})$ selects $k$ experts to be activated to inference labels for $\Mat{x}$.
In this paper, we consider the normalized gating function, which first chooses the $k$ logits from the output, sets the remainders to zeros, and applies softmax to normalize the chosen entries:
\begin{align} \label{eqn:gate_output}
a(\Mat{x})_j &= \left\{
\begin{array}{ll}
\frac{\exp(g(\Mat{x})_j)}{\sum_{t \in \Set{J}(\Mat{x})} \exp(g(\Mat{x})_t) } & \text{if } j \in \Set{J}(\Mat{x}) \\ \\
0 & \text{if } j \notin \Set{J}(\Mat{x})
\end{array}\right. ,
\end{align}
where $g: \real^d \rightarrow \real^T$ computes the weight for each expert, and $\Set{J}(\Mat{x})$ finds a sparse mask with at most $k$ non-zero entries according to $\Mat{x}$, i.e., $\lvert \Set{J}(\Mat{x}) \rvert = k, \forall \Mat{x} \in \real^{d}$.
Most typically, $\Set{J}(\Mat{x})$ selects the indices corresponding to the top-$k$ largest logits from $g(\Mat{x})$ \cite{shazeer2017outrageously}.

Now we can formally state the hypothesis space of an SMoE model, which is a composition of both the hypothesis spaces of gating and expert networks.
\begin{definition} \label{dfn:smoe}
Suppose all expert networks $h_1, \cdots, h_T \in \Set{H}$ is selected from the same hypothesis space $\Set{H}$, and $k$-sparse routing function $a \in \Set{A}$ is chosen from the hypothesis space $\Set{A}$. Define the hypothesis space of the SMoE model with $T$ experts and $k$-sparse routing function following Eq. \ref{eqn:smoe} and Eq. \ref{eqn:gate_output} as below:
\begin{align}
\Set{F}(T, k) = \left\{ f(\Mat{x}) = \sum_{j=1}^{T} a(\Mat{x})_j h_j(\Mat{x}) : h_1, \cdots, h_T \in \Set{H}, a \in \Set{A} \right\}
\end{align}
\end{definition}

\paragraph{Complexity Metrics.}
We will also employ two complexity metrics classical in learning theory to characterize the generalization error of SMoE.
First, we consider Rademacher complexity, which directly measures the capacity of a model class in terms of its ability to fit random labels, and it can depend on the specific data distribution.
Given a class of functions $\mathcal{H}$,  data distribution $\Set{D}$ and samples $S=\left\{z_1, \ldots, z_m\right\}$ drawn i.i.d. from $\Set{D}$, we can define Rademacher complexity as below:
\begin{definition}[Rademacher complexity] \label{dfn:radem_complexity}
The empirical Rademacher complexity of $\mathcal{H}$ is defined to be     
\begin{align}
\radem_m(\Set{H}, \Set{S})=\mean_\sigma\left[\sup _{f \in \mathcal{H}}\left(\frac{1}{m} \sum_{i=1}^m \sigma_i f\left(z_i\right)\right)\right]
\end{align}
where $\sigma_1, \ldots, \sigma_m$ are independent random variables uniformly chosen from $\{-1,1\}$. We will refer to such random variables as Rademacher variables.
The Rademacher Complexity of $\Set{H}$ is defined as $\radem_m(\mathcal{H}) = \mean_{\Set{S} \sim \Set{D}^m} \left[ \radem_m(\Set{H}, \Set{S}) \right]$.
\end{definition}

We also consider a combinatorial measure of model capacity -- \textit{Natarajan dimension}, which often provides tighter bounds for finite decision regions \citep{jin2023upper}. 
Natarajan dimension is a generalization of the VC dimension to classes of multiclass predictors \citep{shalev-shwartz_ben-david_2022}.
We formally state it below, which requires us to first generalize the definition of shattering.
\begin{definition}[Natarajan Dimension] \label{dfn:natarajan_dim}
We say that a set $\Set{C} \subset \mathcal{X}$ is shattered \cite{shalev-shwartz_ben-david_2022} by hypothesis space $\mathcal{H}$ if there exist two functions $f_0, f_1: \Set{C} \rightarrow[k]$ such that i) for every $\Mat{x} \in \Set{C}, f_0(\Mat{x}) \neq f_1(\Mat{x})$; ii) for every $\Set{B} \subset \Set{C}$, there exists a function $h \in \mathcal{H}$ such that $\forall \Mat{x} \in \Set{B}, h(\Mat{x})=f_0(\Mat{x})$ and $\forall \Mat{x} \in \Set{C} \backslash \Set{B}, h(\Mat{x})=f_1(\Mat{x})$.
The Natarajan dimension of $\Set{H}$, denoted by $\mathrm{NDim}(\Set{H})$, is the maximal size of a shattered set $\Set{C} \subset \Set{X}$.
\end{definition}
When $k = 2$, this definition degenerates to the VC dimension.
Interested readers are referred to \citet{shalev-shwartz_ben-david_2022} for more details.
\vspace{-1em}
\subsection{Generalization Error Analysis}
\vspace{-1em}
In this section, we analyze the generalization error defined as the discrepancy between the empirical and population losses: $\lvert \LS - \LD \rvert$.

First of all, we quantify the complexity of a family of routing functions.
Our approach is to disentangle the gating output by normalized weights multiplied with a mask: $a(\Mat{x}) = m(\Mat{x}) \odot \Mat{\nu}(\Mat{x})$, where $m(\Mat{x})_j = 1$ if $a(\Mat{x})_j \ne 0$, otherwise $m(\Mat{x})_j = 0$.
We note that $m(\Mat{x}): \real^d \rightarrow \{0, 1\}^T$ is a multi-class classifier.
Henceforth, we can characterize the complexity of $m(\Mat{x})$ via the Natarajan dimension.
Define a family of masking functions $\Set{M}$ induced by the class of gating functions $\Set{A}$.
Then the complexity of $\Set{M}$ is specified by the following assumptions.
\begin{assumption}
\label{ass:nat_dim}
The Natarajan dimension of $\Set{M}$ is scaled with the number of tasks $N$ as $\mathrm{NDim}(\Set{M}) = O(Nd_N)$ where $d_N$ is the base case when $N=1$.
\end{assumption}
$\mathrm{NDim}(\Set{M})$ represents the maximal cardinality of a set that shatters all possible outcomes of $m(\Mat{x})$, which can be used for counting the number of sparse patterns produced by $a(\Mat{x})$
In Assumption \ref{ass:nat_dim}, the capacity of the sparse router is growing with the number of tasks because more tasks often cause higher complexity of compositional data distribution, which yields more diverse expert combinations to solve the compositional tasks.

Next, we make basic assumptions on the loss function:
\begin{assumption}
\label{ass:loss}
The loss function $\ell: \Set{Y} \times \Set{Y} \rightarrow [0, 1]$ is $C$-Lipschitz.
\end{assumption}
Such an assumption is standard in generalization error analysis \citep{shalev-shwartz_ben-david_2022}. 
Essentially, we hypothesize that the loss function is bounded and Lipschitz continuous, which is satisfied by many common choices (e.g. cross-entropy or MSE) when the inputs or outputs are restricted to a compact space.

Now we can state the main result bounding the generalization error of SMoE under the compositional learning setting:
\begin{theorem} \label{thm:main}
Consider the hypothesis space $\Set{F}(T, k)$ stated in Definition \ref{dfn:smoe}, under Assumptions \ref{ass:nat_dim} and \ref{ass:loss}, with probability at least $1 - \delta$ over the selection of training samples, the generalization error is upper bounded by:
\begin{align}
\lvert \LS - \LD \rvert = O\left(4C\radem_m(\Set{H}) + 2\sqrt{\frac{2 k N d_N \log T + Nd_N \log(2m) + \log(2 / \delta)}{2m}}\right),
\end{align}
where $\radem_m(\Set{H})$ is the Rademacher complexity of the expert hypothesis space $\Set{H}$ (cf. Definition \ref{dfn:radem_complexity}).
\end{theorem}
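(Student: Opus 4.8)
The plan is to bound the uniform deviation $\sup_{f \in \Set{F}(T,k)} \lvert \LS(f) - \LD(f) \rvert$ by splitting the SMoE hypothesis into two independently controlled ``channels'': the continuous expert networks, handled by Rademacher complexity, and the discrete sparse routing, handled by a combinatorial growth-function argument through the Natarajan dimension. These channels produce, respectively, the $4C\radem_m(\Set{H})$ term and the term under the square root. I would begin with the standard symmetrization argument on a ghost sample of size $2m$, which is precisely what makes the growth function appear evaluated at $2m$ (hence the $\log(2m)$ factor) and which supplies the confidence term $\log(2/\delta)$ through McDiarmid's inequality.

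For the expert channel, I would first use McDiarmid and symmetrization to reduce $\sup_f \lvert \LS(f) - \LD(f)\rvert$ to the empirical Rademacher complexity of the composed class $\radem_m(\ell \circ \Set{F}(T,k))$, and then peel off the loss with the Ledoux--Talagrand contraction inequality together with Assumption~\ref{ass:loss} ($C$-Lipschitz), producing a factor of $C$. The crucial reduction is $\radem_m(\Set{F}(T,k)) \lesssim \radem_m(\Set{H})$: here I would exploit that, for every $\Mat{x}$, the active routing weights $a(\Mat{x})$ form a probability vector (nonnegative and summing to one over the active set, by the softmax normalization in Eq.~\ref{eqn:gate_output}), so each SMoE output is a data-dependent convex combination of expert outputs. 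Since passing to convex combinations does not inflate Rademacher complexity beyond that of the base class, the expert contribution collapses to a single $\radem_m(\Set{H})$, and bookkeeping of the symmetrization and contraction constants yields the $4C$ prefactor.

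For the routing channel, I would decompose the gate as $a(\Mat{x}) = m(\Mat{x}) \odot \Mat{\nu}(\Mat{x})$ and control the binary mask class $\Set{M}$, a multiclass predictor whose label alphabet is the set of admissible sparse patterns, of size $\binom{T}{k} \le T^{k}$. Invoking the Natarajan-dimension analogue of the Sauer--Shelah lemma, the number of distinct mask behaviours realizable on the $2m$-point sample is at most $(2m)^{\mathrm{NDim}(\Set{M})} \bigl(\binom{T}{k}\bigr)^{2\,\mathrm{NDim}(\Set{M})}$. Taking logarithms, substituting $\mathrm{NDim}(\Set{M}) = O(N d_N)$ from Assumption~\ref{ass:nat_dim}, and using $\log\binom{T}{k} \le k \log T$ produces exactly $2 k N d_N \log T + N d_N \log(2m)$. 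A union bound (equivalently, a Massart-type maximal inequality over the finitely many sparse patterns) then merges this count with the confidence term $\log(2/\delta)$ inside a single square root scaled by $1/(2m)$, giving the second summand.

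The main obstacle is the interface between the continuous and discrete channels. Concretely, I expect two delicate points. First, justifying that the data-dependent convex combination of $T$ distinct experts inflates the Rademacher complexity by at most a constant rather than by a factor of $T$ or $k$: this requires exploiting convexity uniformly over $\Mat{x}$ and cannot simply union-bound over experts. Second, handling the fact that the sparse patterns are themselves data-dependent, so the pattern count cannot be union-bounded naively but must be threaded through the symmetrization step so that it enters as the growth function $\Pi_{\Set{M}}(2m)$. Getting these two reductions to combine additively rather than multiplicatively is what makes the final bound clean, and is the step I would verify most carefully.
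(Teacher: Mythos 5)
Your proposal follows the paper's own proof essentially step for step: the same ghost-sample symmetrization at size $2m$, the same decomposition $a(\Mat{x}) = \mu(\Mat{x}) \odot \nu(\Mat{x})$ with the mask class counted via the Natarajan lemma as $\binom{T}{k}^{2Nd_N}(2m)^{Nd_N}$ and $\log\binom{T}{k} \le k \log T$, the same union bound over realized sparse patterns merged with McDiarmid's inequality, and the same Lipschitz-contraction plus convex-combination argument collapsing the expert contribution to $\radem_m(\Set{H})$ with the $4C$ prefactor. The approach, decomposition, and key lemmas all coincide with the paper's, and the two ``delicate points'' you flag are handled exactly as you anticipate (conditioning on a fixed mask before applying the convex-hull reduction, and threading the pattern count through the ghost sample as a growth function).
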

The proof is elementary and deferred to Appendix \ref{sec:app:proofs}.
Theorem \ref{thm:main} reveals that SMoE can be helpful for compositional learning. This can be seen in the comparison with the naive ensemble, where $T$ weak models are stacked to form the stronger model.
Intuitively, the entire SMoE model contains $T$ copies of expert learners, thus, its capacity is as large as the naive ensemble.
However, the generalization error of the naive ensemble follows the classical results and gives a rate growing linearly with the number of total experts $O(T\radem(\Set{H}) + \sqrt{\log(2/\delta)/2m})$, while SMoE only exhibits a logarithmic dependency on $T$ and linear dependency on $k$.
This suggests that dynamic routing not only reduces the inference cost but also shrinks the average model complexity.
From the perspective of each data point, only a part of the model is used.

Nevertheless, dynamic routing incurs additional costs. Specifically, it introduces an extra term $N d_N$ in the error bound due to the flexibility provided by the learnable router.
Furthermore, the composition of multiple tasks exacerbates the bound, as the increased task complexity demands a greater variety of routing patterns, making SMoE harder to generalize.
\vspace{-1em}
\subsection{Optimal Sparsity Analysis}
\vspace{-1em}
In this section, we aim to analyze the optimal sparsity $k$ the model should choose to obtain the least error. We first decompose the generalization error obtain from~\Cref{thm:main} into a different form as the summation of \textbf{Approximation Error} and \textbf{Estimation Error} under moderate assumptions, where both error terms are derived independently based on sparsity and input task complexity, and then analyze the optimal choice of $k$ under different hyperparameter settings.
\subsubsection{Approximation Error Construction.}
In this section, we outline the assumptions required to characterize the Approximation Error first, and then construct the Approximation Error.
\begin{assumption}\label{ass:approx1}
    Each input $\Mat{x}$ is a composition of two tasks sampled from a total number of $N$ tasks, \ie, $\Mat{x} = T_i \circ T_j$ where $T_i, T_j \in \{T_1, T_2, \dots, T_N\}$.
\end{assumption}

\begin{assumption}\label{ass:approx2}
    Each expert $h_i$ is uniformly assigned with non-overlapping compositional tasks. 
\end{assumption}

\begin{assumption}\label{ass:approx3}
    Each single task $T_i$ is equally weighted and does not account for task-specific variability such as importance or difficulty of each task.
\end{assumption}
We adopt these simplified settings to help with the theoretical analysis. A more complex and realistic \textit{Power-Law Distribution}~\citep{zhong2024law} setting for compositional tasks is described in~\Cref{supp:powerlaw}. Under the current simplified setting, we define the Approximation Error of SMoE as following:
\begin{definition}[Approximation Error]\label{def:approx_error} 
    Under Assumptions \ref{ass:approx1} \ref{ass:approx2} \ref{ass:approx3}, the Approximation Error of SMoE is modeled as:
    \begin{align} \label{eqn:approx_error}
        E_{\text{approximation}}(k, N) = O\left(\frac{N^2}{k}\right),
    \end{align}
    where $N^2$ is the total number of pairwise task compositions, and $k$ is the total number of selected experts.
\end{definition}

\begin{remark}
    The Approximation Error $O\left(\frac{N^2}{k}\right)$ is independent of the number of input data $m$ and it treats each task uniformly and distributes each experts' capacity evenly across all $N^2$ possible combinations. This error term decreases monotonically as \(k\) increases, and when $k$ is small it dominates due to insufficient expert capacity for task combinations.
\end{remark}
\subsubsection{Estimation Error Construction.}
In this section, we construct the Estimation Error based on~\Cref{thm:main} to capture the variations in the generalization error given different sparsity levels, number of input data, and the complexity of the router.

\begin{definition}[Estimation Error]\label{def:esti_error}  
    Under Assumptions \ref{ass:approx1} \ref{ass:approx2} \ref{ass:approx3}, the Estimation Error of SMoE is modeled as:
    \begin{align*}~\label{eqn:esti_error}
        E_{\text{estimation}}(k, N) = O\left(\sqrt{\frac{N k d_N}{m} \log\left(\frac{T}{k}\right)}\right),
    \end{align*}
    where $N$, $k$, $d_N$, $m$, and $T$ follows their definitions in~\Cref{sec:setting}.
\end{definition}

\begin{remark}
    Regarding the Estimation Error, we observe that it is a concave function with respect to $k$, where the error increases and reaches the maximum at $k = \frac{T}{e}$, and then decreases as $k \to T$. The presence of \(N\) penalizes large \(k\) more strongly, making the estimation error scale with both the number of tasks \(N\) and the number of active experts \(k\).
\end{remark}

\subsubsection{Bias-Complexity Trade-off}
In this section, we provide insights on how does the total error $E_{\text{total}}(k, N) = O\left(\frac{N^2}{k}\right) + O\left(\sqrt{\frac{N k d_N}{m} \log\left(T/k\right)}\right)$ scales with different parameter settings, helping us choose the optimal $k^{*}$ to minimize the overall error.
\vspace{-1em}
\paragraph{$k$ Dependence (Number of Selected Experts Per Task):}
The behavior of the error terms varies significantly with k. For small k, the approximation error $O\left(\frac{N^2}{k}\right)$ dominates while the estimation error remains minimal. As k reaches medium values $(k \approx \frac{T}{e})$, the estimation error peaks while the approximation error begins to decrease. For large k when $k \to T$, the approximation error approaches zero as expert capacity becomes sufficient, while the estimation error decreases but remains non-negligible due to $N$ scaling. The optimal $k^*$ must balance these competing errors.
\vspace{-1em}
\paragraph{$N$ Dependence (Task Complexity):}
Both approximation and estimation errors scale with $N$, but the approximation error grows faster at large $k$. For large \( N \), larger \( k \) values are favored to reduce the approximation error.
\vspace{-1em}
\paragraph{$d_N$ Dependence (Model Complexity):}
A high router complexity $d_N$ amplifies the estimation error penalty for large $k$, favoring $k^{*} < T$. 
\vspace{-1em}
\paragraph{$m$ Dependence (Dataset Size):}
The estimation error diminishes with increasing $m$, reducing its impact at large $k$. This allows $k = T$ to minimize $E(k, N)$. For large $m$, the approximation error tends to dominate. For small $k$, it may gradually turn favoring $k^{*} < T$.
\vspace{-1em}
\paragraph{$T$ Dependence (Total Experts):}
Increasing \( T \) increases the estimation error if \( k \) is kept constant. To prevent the estimation error from increasing, \( k \) should be increased proportionally with \( T \).

Overall, we conclude that (a): Under large dataset regime ($m$ is large), a larger $k^{*}$ is preferred. (b): Under high router complexity regime ($d_N$ is large), a smaller $k^{*}$ is preferred to mitigate the increased estimation error. (c): Under increasing task complexity setting, a larger $k^{*}$ is preferred. (d): When the total number of available experts $T$ is increasing, $k$ needs to be adjusted proportionally to prevent an increase in the estimation error.

\section{Conclusion}
In this work, we investigated whether conventional sparse activation strategies in SMoE models are optimal for compositional tasks. Through both empirical and theoretical analyses, we showed that the optimal number of activated experts scales with task complexity and depends on training data size and model architecture. Our experiments revealed that increasing the number of experts improves generalization on harder compositional tasks but can degrade performance on simpler ones, highlighting the need for adaptive sparsity. Theoretically, we derived generalization error bounds for SMoE models under compositional settings, identifying a trade-off between approximation and estimation errors. These findings challenge traditional sparse activation assumptions and provide actionable insights for designing SMoE models. Future work will explore dynamic routing strategies to adapt expert activation levels based on real-time task complexity.

\newpage
\bibliography{colm2025_conference}
\bibliographystyle{colm2025_conference}

\newpage
\appendix
\section{Appendix}
\subsection{More Related Works}\label{sec:supp_more_related}
\paragraph{SRAVEN symbolic reasoning test}
Inspired by Raven Progressive Matrices (RAVEN) Test~\citep{raven2008uses}, a human Intelligence Quotient test on abstract reasoning,~\citet{schug2024attentionhypernetwork} proposed the symbolic compositional SRAVEN test that requires a model to learn the composition of arbitrarily sampled rules by searching through a large number of possible hypotheses. Similar to RAVEN, each SRAVEN task is a $3\times3$ grid and the model is asked to query the final panel on the grid given the information from the first 8 panels. Each panel is a vector of length $M$, where each entry on the vector corresponds to a different rule sampled from $R=8$ possible rules. Therefore, each task is composed by a finite set of rule combinations, and the prediction will be marked as correct only if the model predict all the entries of the final vector. More detailed description can be found in Section 4 in~\citep{schug2024attentionhypernetwork}. In this paper, we trained SMoE-based transformers on SRAVEN task due to its compositional nature and the flexibility on tuning the difficulty of the task.

\paragraph{Skill-Mix}
Skill-Mix~\citep{yu2023skill} is a novel evaluation method for assessing language models' compositional abilities. It challenges models to generate short text pieces combining random sets of $k$ skills out of $N$ number of linguistic skills within a given topic. Therefore, the test's difficulty increases with $k$. A Grader model (e.g., GPT-4~\citep{openai2024gpt4technicalreport}) is used to evaluate the generated outputs based on skill application, topic relevance, length, and coherence. In this paper, we tested SMoE-based LLMs on Skill-Mix with varying number of $k$. More experimental and grading details can be found in Section C in~\citet{yu2023skill}.

\subsection{Training SMoE-based Transformers on SRAVEN task}







\subsubsection{Ablation study: Switching Softmax attention to HYLA attention}\label{sec:supp_ablation}
Proposed by~\citet{schug2024attentionhypernetwork}, Hypernetwork Linear Attention (HYLA) encourages compositional generalization by reinforcing the hypernetwork perspective. We repeat the same set of experiments by replacing softmax attention with HYLA attention. We have the following interesting findings:
\begin{itemize}
    \item By comparing~\Cref{fig:moe_hypatt_ood} and~\Cref{fig:moe_softmaxatt_ood}, we observe that HYLA does not improve and can even degrade compositional generalization (particularly OOD accuracy) on SMoE models when training on easier compositional tasks (i.e., $M=2$) compared to using softmax attention. However, as the compositional task becomes more challenging, HYLA significantly outperforms softmax attention, enhancing the compositional generalization of the model across all activation mechanisms.
    
    \item As shown in~\Cref{fig:moe_hypatt_test} and~\Cref{fig:moe_hypatt_ood}, sparse activation mechanisms like Top-1 and Top-2 remain the worst-performing routing strategies across almost all task difficulty levels, even with HYLA attention. This finding aligns with our main results discussed in~\Cref{sec:main}.
    
    \item As illustrated in~\Cref{fig:moe_softmaxatt_test} and~\Cref{fig:moe_softmaxatt_ood}, under the softmax attention setting, increasing the number of activated experts leads to marginal improvements in both Test and OOD accuracy as the compositional task difficulty increases. In contrast, under HYLA attention, the same increase in activated experts results in dramatic improvements for more challenging compositional tasks, as shown in the last row of~\Cref{fig:moe_hypatt_test} and~\Cref{fig:moe_hypatt_ood}. The optimal number of activated experts scales proportionally with task difficulty $M$, and the performance gap between the best and second-best routing mechanisms becomes significant. We hypothesize that HYLA attention further enhances expert specialization for compositional tasks.

    Overall, HYLA attention coupled with increased expert activation is promising for robust compositional generalization in challenging tasks, while simpler tasks may not benefit significantly from these adjustments.
\end{itemize}
\newpage
\subsubsection{SRAVEN Experiments Results Details}\label{sec:supp_results}
\begin{figure}[!htb]
    \centering
    \includegraphics[width=0.8\linewidth]{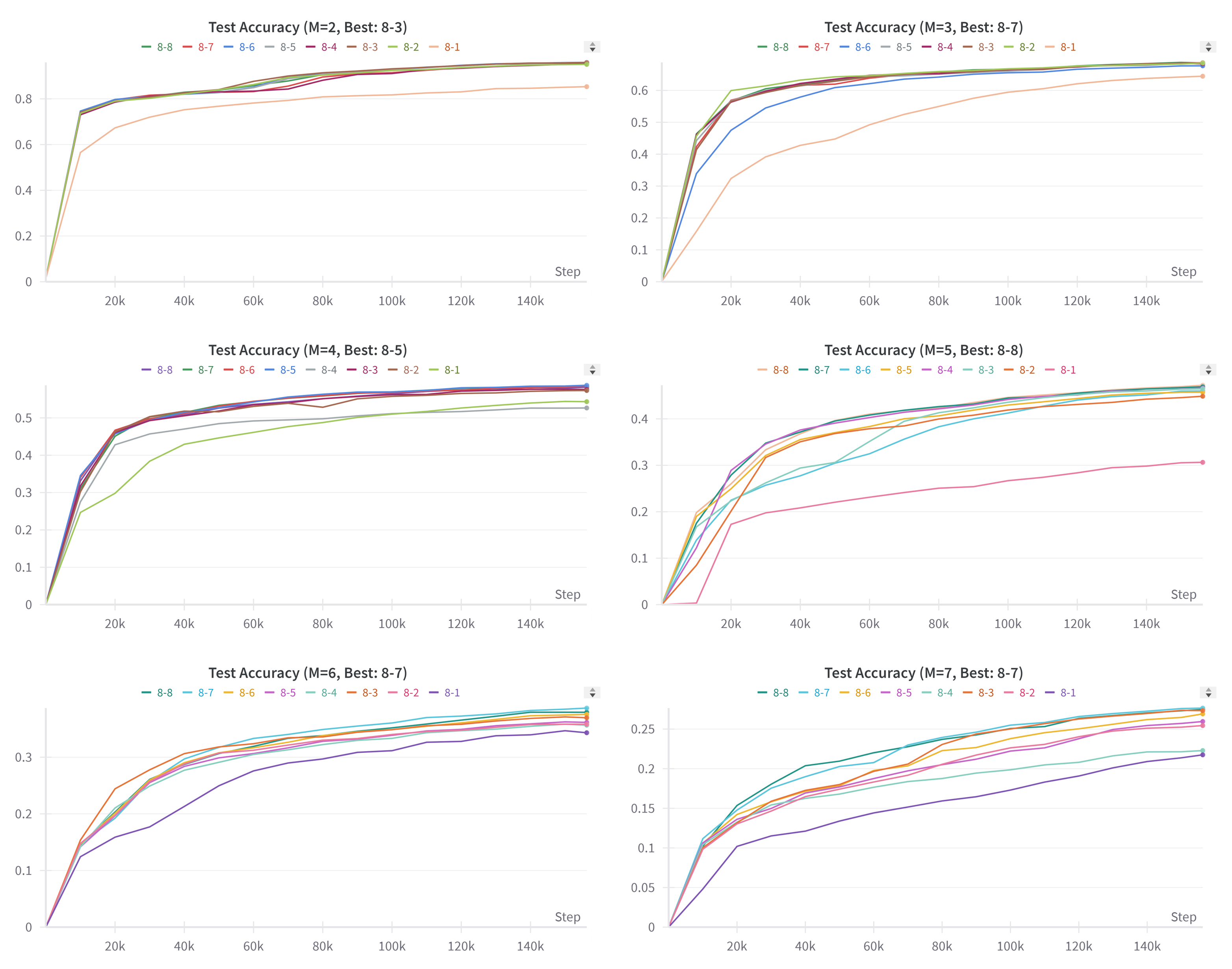}
    \caption{Test Accuracy of training SMoE Transformer with softmax attention.}
    \label{fig:moe_softmaxatt_test}
\end{figure}

\begin{figure}[!htb]
    \centering
    \includegraphics[width=0.8\linewidth]{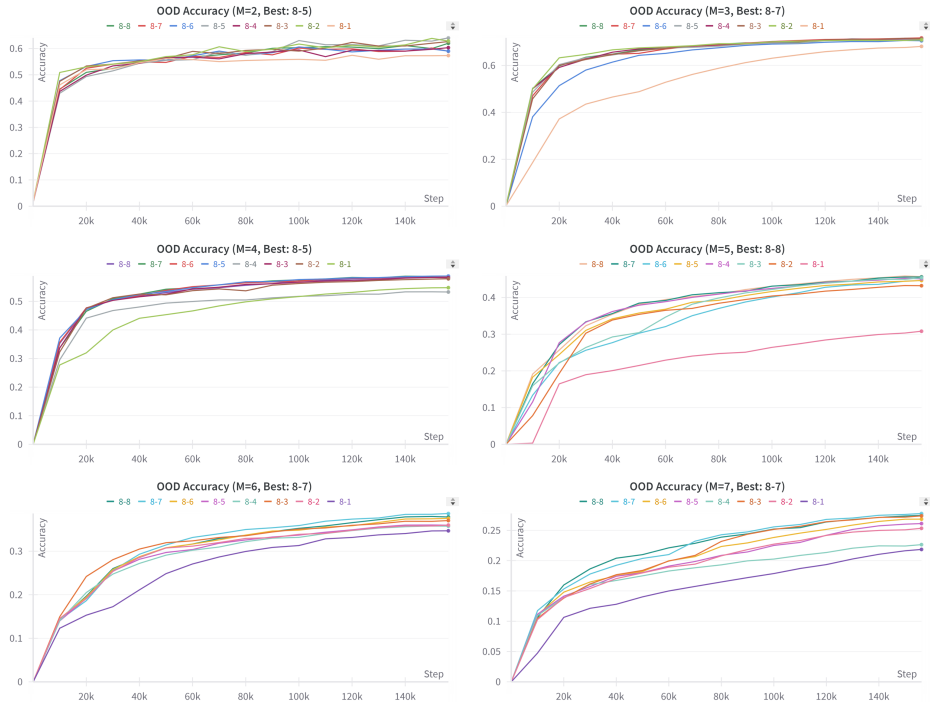}
    \caption{OOD Accuracy of training SMoE Transformer with softmax attention.}
\end{figure}

\begin{figure}[!htb]
    \centering
    \includegraphics[width=0.8\linewidth]{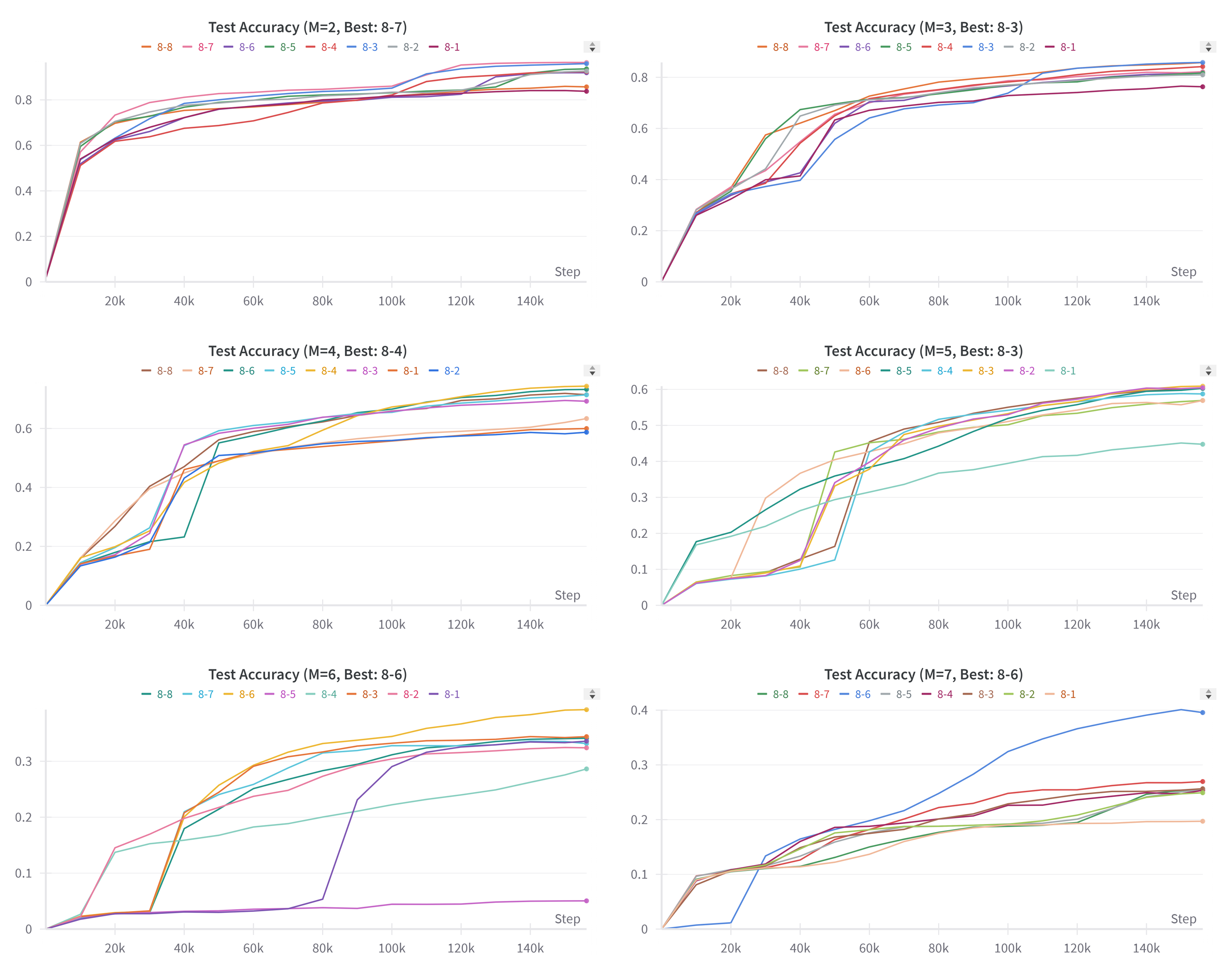}
    \caption{Test Accuracy of training SMoE Transformer with hypernetwork linear attention.}
    \label{fig:moe_hypatt_test}
\end{figure}

\begin{figure}[!htb]
    \centering
    \includegraphics[width=0.8\linewidth]{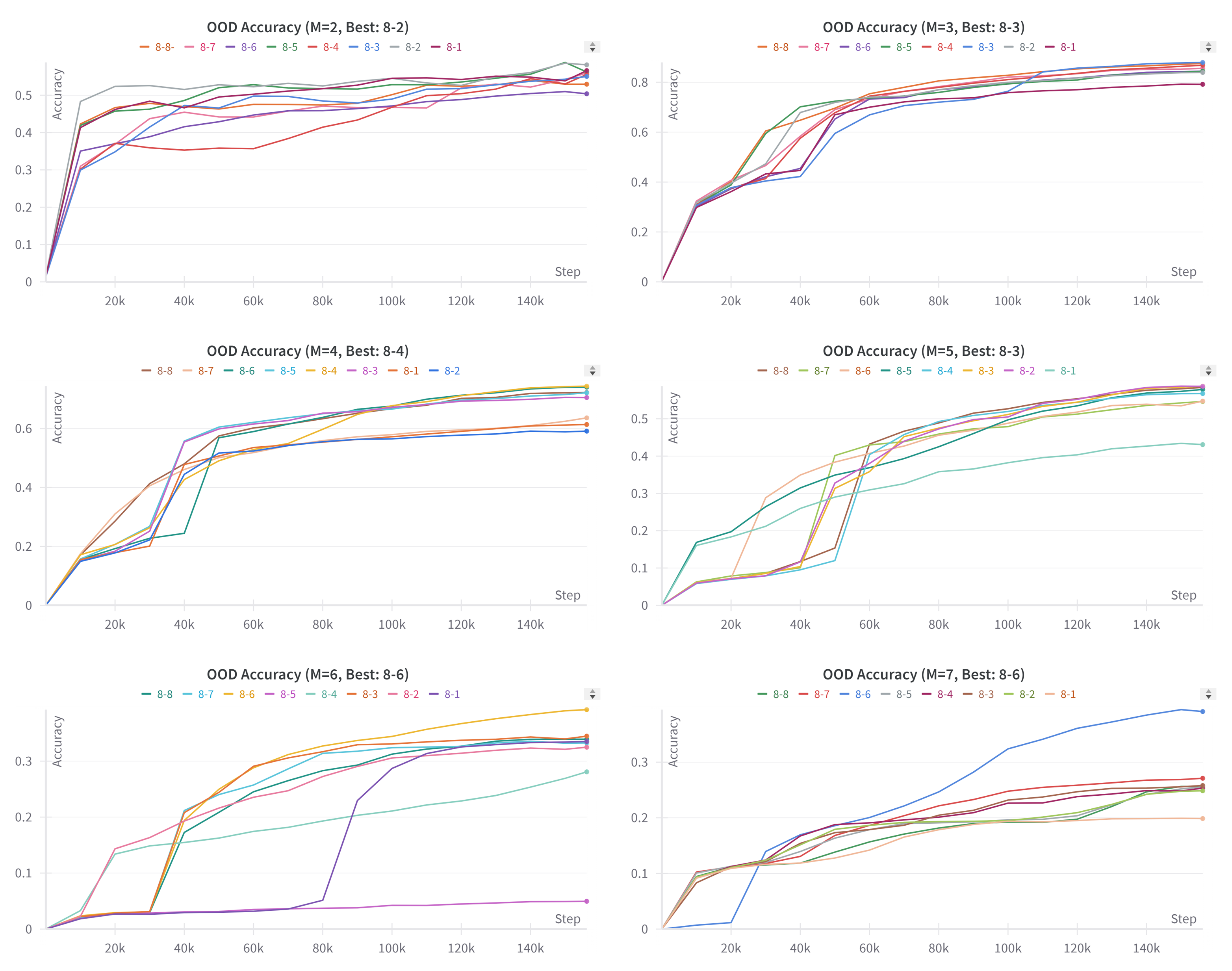}
    \caption{OOD Accuracy of training SMoE Transformer with hypernetwork linear attention.}
    \label{fig:moe_hypatt_ood}
\end{figure}
\FloatBarrier

\subsection{Evaluating SMoE-based Large Language Models on Skill-Mix}
We copied the grading metrics definitions from~\citet{yu2023skill} as a reference for the readers. Note that the first three grading metrics are very tough and most models will earn very few points when $k$ increases, as these metrics are conditioned on a specific event.

\subsubsection{Skill-Mix Grading Metrics Definition}\label{sec:supp_metrics}
Each generated text can receive up to $k + 3$ points: 1 point for each correctly illustrated skill, 1 point for sticking to the topic, 1 point for coherence / making sense, and 1 point for having at most $k - 1$ sentence. Recall that we grade each generated text three times. In each round of grading, we parse each of the criteria individually from the Grader model's output. For each criterion, we then collect the majority vote among the three grading rounds. The grading metrics are the following:
\begin{itemize}
    \item \textit{Ratio of Full Marks}: 1 if all $k + 3$ points are earned, and 0 otherwise
    \item \textit{Ratio of All Skills}: 1 if $k$ points are awarded for the $k$ skills and at least 2 points are awarded for the remaining criteria, and 0 otherwise
    \item \textit{Skill Fraction}: the fraction of points awarded for the $k$ skills if all 3 points are awarded for the remaining criteria, and 0 otherwise
\end{itemize}
We then take the maximum value of the metrics among the 3 generations for a given ($k$ skill, 1 topic) combination, and average the maximum value across all the combinations.

\newpage

\subsection{More Skill-Mix Results}\label{supp:skillmix}

\begin{table}[h!]
\vspace{-4mm}
\centering
\resizebox{0.9\textwidth}{!}{%
\begin{tabular}{|c|c|c|c|c|}
\hline
 & \multicolumn{4}{c|}{Skill-Mix results for DBRX-132B evaluated by GPT4} \\
\cline{2-5}
EPT & $k=1$ & $k=2$ & $k=3$ & $k=4$ \\
\hline
\multirow{3}{*}{1} & 0.00 $\pm$ 0.000 & 0.00 $\pm$ 0.000 & 0.00 $\pm$ 0.000 & 0.00 $\pm$ 0.000 \\
 & 0.00 $\pm$ 0.000 & 0.00 $\pm$ 0.000 & 0.00 $\pm$ 0.000 & 0.00 $\pm$ 0.000 \\
 & 0.00 $\pm$ 0.000 & 0.00 $\pm$ 0.000 & 0.00 $\pm$ 0.000 & 0.00 $\pm$ 0.000 \\
\hline
\multirow{3}{*}{2} & 0.60 $\pm$ 0.245 & 0.20 $\pm$ 0.200 & 0.00 $\pm$ 0.000 & 0.00 $\pm$ 0.000 \\
 & 1.00 $\pm$ 0.000 & 0.40 $\pm$ 0.245 & 0.00 $\pm$ 0.000 & 0.00 $\pm$ 0.000 \\
 & 0.60 $\pm$ 0.245 & 0.40 $\pm$ 0.187 & 0.33 $\pm$ 0.000 & 0.15 $\pm$ 0.100 \\
\hline
\multirow{3}{*}{3} & 0.80 $\pm$ 0.200 & 0.40 $\pm$ 0.245 & 0.00 $\pm$ 0.000 & 0.00 $\pm$ 0.000 \\
 & 1.00 $\pm$ 0.000 & 0.60 $\pm$ 0.245 & 0.00 $\pm$ 0.000 & 0.00 $\pm$ 0.000 \\
 & 0.80 $\pm$ 0.200 & 0.70 $\pm$ 0.122 & 0.60 $\pm$ 0.067 & 0.35 $\pm$ 0.061 \\
\hline
\multirow{3}{*}{4 (default setting)} & \textbf{1.00 $\pm$ 0.000} & 0.40 $\pm$ 0.245 & 0.00 $\pm$ 0.000 & 0.00 $\pm$ 0.000 \\
 & \textbf{1.00 $\pm$ 0.000} & 0.40 $\pm$ 0.245 & 0.00 $\pm$ 0.000 & 0.00 $\pm$ 0.000 \\
 & \textbf{1.00 $\pm$ 0.000} & 0.70 $\pm$ 0.122 & 0.60 $\pm$ 0.067 & 0.40 $\pm$ 0.061 \\
\hline
\multirow{3}{*}{5} & \textbf{1.00 $\pm$ 0.000} & \textbf{0.60 $\pm$ 0.245} & 0.20 $\pm$ 0.200 & 0.00 $\pm$ 0.000 \\
 & \textbf{1.00 $\pm$ 0.000} & \textbf{0.80 $\pm$ 0.200} & 0.20 $\pm$ 0.200 & 0.00 $\pm$ 0.000 \\
 & \textbf{1.00 $\pm$ 0.000} & \textbf{0.80 $\pm$ 0.122} & 0.67 $\pm$ 0.105 & 0.45 $\pm$ 0.094 \\
\hline
\multirow{3}{*}{6} & 0.80 $\pm$ 0.200 & 0.40 $\pm$ 0.245 & 0.20 $\pm$ 0.200 & 0.00 $\pm$ 0.000 \\
 & 0.80 $\pm$ 0.200 & 0.40 $\pm$ 0.245 & 0.20 $\pm$ 0.200 & 0.00 $\pm$ 0.000 \\
 & 0.80 $\pm$ 0.200 & 0.70 $\pm$ 0.122 & 0.60 $\pm$ 0.163 & 0.50 $\pm$ 0.000 \\
\hline
\multirow{3}{*}{7} & 1.00 $\pm$ 0.000 & 0.40 $\pm$ 0.245 & 0.20 $\pm$ 0.200 & 0.00 $\pm$ 0.000 \\
 & 1.00 $\pm$ 0.000 & 0.40 $\pm$ 0.245 & 0.20 $\pm$ 0.200 & 0.00 $\pm$ 0.000 \\
 & 1.00 $\pm$ 0.000 & 0.70 $\pm$ 0.122 & 0.67 $\pm$ 0.105 & 0.65 $\pm$ 0.061 \\
\hline
\multirow{3}{*}{8} & 0.80 $\pm$ 0.200 & 0.20 $\pm$ 0.200 & 0.20 $\pm$ 0.200 & 0.00 $\pm$ 0.000 \\
 & 1.00 $\pm$ 0.000 & 0.20 $\pm$ 0.200 & 0.20 $\pm$ 0.200 & 0.00 $\pm$ 0.000 \\
 & 0.80 $\pm$ 0.200 & 0.60 $\pm$ 0.100 & 0.67 $\pm$ 0.105 & 0.50 $\pm$ 0.112 \\
\hline
\multirow{3}{*}{9} & 0.80 $\pm$ 0.200 & 0.40 $\pm$ 0.245 & 0.20 $\pm$ 0.200 & 0.00 $\pm$ 0.000 \\
 & 1.00 $\pm$ 0.000 & 0.80 $\pm$ 0.200 & 0.40 $\pm$ 0.245 & 0.00 $\pm$ 0.000 \\
 & 0.80 $\pm$ 0.200 & 0.60 $\pm$ 0.187 & 0.67 $\pm$ 0.105 & 0.55 $\pm$ 0.050 \\
\hline
\multirow{3}{*}{10} & 1.00 $\pm$ 0.000 & 0.20 $\pm$ 0.200 & 0.00 $\pm$ 0.000 & 0.00 $\pm$ 0.000 \\
 & 1.00 $\pm$ 0.000 & 0.60 $\pm$ 0.245 & 0.00 $\pm$ 0.000 & 0.00 $\pm$ 0.000 \\
 & 1.00 $\pm$ 0.000 & 0.60 $\pm$ 0.100 & 0.53 $\pm$ 0.082 & 0.40 $\pm$ 0.061 \\
\hline
\multirow{3}{*}{11} & 1.00 $\pm$ 0.000 & 0.60 $\pm$ 0.245 & 0.20 $\pm$ 0.200 & 0.00 $\pm$ 0.000 \\
 & 1.00 $\pm$ 0.000 & 0.60 $\pm$ 0.245 & 0.20 $\pm$ 0.200 & 0.00 $\pm$ 0.000 \\
 & 1.00 $\pm$ 0.000 & 0.80 $\pm$ 0.122 & 0.47 $\pm$ 0.170 & 0.55 $\pm$ 0.094 \\
\hline
\multirow{3}{*}{12} & 0.80 $\pm$ 0.200 & 0.60 $\pm$ 0.245 & 0.00 $\pm$ 0.000 & 0.00 $\pm$ 0.000 \\
 & 1.00 $\pm$ 0.000 & 0.60 $\pm$ 0.245 & 0.20 $\pm$ 0.200 & 0.00 $\pm$ 0.000 \\
 & 0.80 $\pm$ 0.200 & 0.80 $\pm$ 0.122 & 0.40 $\pm$ 0.163 & 0.50 $\pm$ 0.057 \\
\hline
\multirow{3}{*}{13} & 1.00 $\pm$ 0.000 & 0.40 $\pm$ 0.245 & \textbf{0.40 $\pm$ 0.245} & 0.00 $\pm$ 0.000 \\
 & 1.00 $\pm$ 0.000 & 0.40 $\pm$ 0.245 & \textbf{0.60 $\pm$ 0.245} & 0.00 $\pm$ 0.000 \\
 & 1.00 $\pm$ 0.000 & 0.60 $\pm$ 0.187 & \textbf{0.67 $\pm$ 0.149} & 0.50 $\pm$ 0.079 \\
\hline
\multirow{3}{*}{14} & 1.00 $\pm$ 0.000 & 0.60 $\pm$ 0.245 & 0.00 $\pm$ 0.000 & \textbf{0.20 $\pm$ 0.200} \\
 & 1.00 $\pm$ 0.000 & 0.60 $\pm$ 0.245 & 0.00 $\pm$ 0.000 & \textbf{0.20 $\pm$ 0.200} \\
 & 1.00 $\pm$ 0.000 & 0.60 $\pm$ 0.245 & 0.53 $\pm$ 0.082 & \textbf{0.70 $\pm$ 0.122} \\
\hline
\multirow{3}{*}{15} & 1.00 $\pm$ 0.000 & 0.20 $\pm$ 0.200 & 0.00 $\pm$ 0.000 & 0.00 $\pm$ 0.000 \\
 & 1.00 $\pm$ 0.000 & 0.60 $\pm$ 0.245 & 0.00 $\pm$ 0.000 & 0.00 $\pm$ 0.000 \\
 & 1.00 $\pm$ 0.000 & 0.40 $\pm$ 0.187 & 0.53 $\pm$ 0.082 & 0.40 $\pm$ 0.061 \\
\hline
\multirow{3}{*}{16} & 0.80 $\pm$ 0.200 & 0.20 $\pm$ 0.200 & 0.20 $\pm$ 0.200 & 0.00 $\pm$ 0.000 \\
 & 0.80 $\pm$ 0.200 & 0.40 $\pm$ 0.245 & 0.40 $\pm$ 0.245 & 0.00 $\pm$ 0.000 \\
 & 0.80 $\pm$ 0.200 & 0.50 $\pm$ 0.158 & 0.47 $\pm$ 0.170 & 0.25 $\pm$ 0.112 \\
\hline
\end{tabular}%
}
\caption{Skill-Mix Evaluation Results on DBRX-132B~\citep{DBRX}. The grading metrics are Ratio of Full Marks/Ratio of All Skills/Skill Fraction as defined in~\Cref{sec:supp_metrics}. 'EPT' is the abbreviation for 'Number of experts per token'.}
\label{tab:DBRX_skillmix}
\end{table}

\newpage
\subsection{Power-Law Distribution of compositional task difficulties}\label{supp:powerlaw}
In this section, we describe the more complex and realistic \textit{Power-Law Distribution}~\citep{zhong2024law} setting of compositional tasks.

\subsubsection{Task-specific Variability}
We argue that the simplified Approximation Error defined in~\Cref{def:approx_error} treats every task pair uniformly, assuming \( k \) experts distribute their capacity evenly across all \( N^2 \) task combinations. This assumption, while simplifying the analysis, does not account for task-specific variability:
\begin{itemize}
    \item Some combinations might require significantly more capacity (e.g., harder-to-learn tasks).
    \item Some combinations might overlap or share features, reducing the need for dedicated experts across all \( N^2 \) pairs.
\end{itemize}
If the approximation error is task-specific, a more sophisticated construction is necessary to reflect the heterogeneity of task demands. For example, one can craft weight combinations based on difficulty or importance:
\[
O\left(\frac{\sum_{i, i'} w_{i, i'}}{k}\right),
\]
where \( w_{i, i'} \) represents the weight of importance or difficulty for task pair \( (T_i, T_{i'}) \), as recently probed empirically by \citet{zhong2024law}. Additionally, \( w_{i, i'} \)  is distributed according to a \textit{power-law}:
\[
w_{i, i'} \propto (D_i + D_{i'})^{-\alpha}, \quad \alpha > 1,
\]
where $D_i$ and $D_{i'}$ are the corresponding inverse of the difficulty or importance of compositional tasks $T_i$ and $T_{i'}$. Therefore, high weights are concentrated in a few challenging combinations (smaller \( i + i' \)), and most \( w_{i, i'} \) are near zero for large \( i + i' \). \\
Additionally, the more important/difficult task combinations will dominate, suggesting:
\begin{align*}
    \sum_{(i, i') \text{ s.t. } D_i, D_{i'} \text{are low}} w_{i, i'} \gg \sum_{(i, i') \text{ s.t. } D_i, D_{i'} \text{are high}} w_{i, i'},
\end{align*}
This suggests that \textbf{Reducing \( k \)} focuses capacity on high-weight combinations, minimizing the Approximation Error. If \( k = T \), excess experts can dilute the capacity to low-weight combinations, inflating error due to unnecessary model complexity.
\begin{remark}[Power-law distribution is more realistic in LLM Evaluation]
    The power-law distribution in task importance is more realistic and supported by empirical observations across various studies. \citet{dziri2024faith} highlights how transformers prioritize high-frequency patterns, leaving rare patterns underrepresented, reflecting power-law dynamics. Similarly, \citet{yu2023skill} demonstrates the combinatorial explosion of rare skill requirements, where a few dominant combinations account for most of the performance. Lastly, \citet{chen2024skill} emphasizes skill hierarchies, where foundational skills dominate model capacity, mirroring power-law behavior in skill distributions. Together, these findings hint that activating all experts is practically inefficient for sparse or compositional tasks, as it dilutes capacity across less-relevant combinations.
\end{remark}


\newpage
\section{Complete Proof of main result}
\label{sec:app:proofs}

We present the proof of our main result as below:
\begin{proof}
Following the classical PAC learning theory, the main objective is to show the following probabilistic bound of $\sup_{f \in \Set{F}} \left\lvert \LS(f) - \LD(f) \right\rvert$.
First we use ghost sampling trick: we draw an i.i.d. copies of training samples: $\Set{S'} = \{\Mat{x'}_1, \cdots, \Mat{x'}_m\} \overset{i.i.d.}{\sim} \Set{D}^m$.
Then by Lemma \ref{lem:ghost_sample}, and setting $e^{-\frac{1}{2} \epsilon^2 m} \le 1/4$, we have
\begin{align}
\Prob \left(\sup_{f \in \Set{F}} \left\lvert \LS(f) - \LD(f) \right\rvert \ge \epsilon \right) &\le 2 \Prob \left(\sup_{f \in \Set{F}} \left\lvert \LS(f) - \LSp(f) \right\rvert \ge \frac{\epsilon}{2} \right)
\end{align}
Then our proof proceeds by reformulating the gating function $a(\Mat{x})$.
Let us rewrite $a(\Mat{x}) = \mu(\Mat{x}) \odot \nu(\Mat{x})$, where $\odot$ denotes the element-wise multiplication, $\mu(\Mat{x}): \real^{d} \rightarrow \{0, 1\}^T$ produces a binary mask specifying the sparse expert selection ($\lVert \mu(\Mat{x}) \rVert_0 = k$), and $\nu(\Mat{x}): \real^{d} \rightarrow \real_+^T$ outputs the normalized weights for selected experts such that $\lVert a(\Mat{x}) \rVert_1 = 1$. In particular, we note that $\nu(\Mat{x})$ is dependent of the function $\mu(\Mat{x})$. We define $\Set{V}\lvert_{\mu}$ as the class of $g$ induced by $\Set{A}$ and $\mu(\Mat{x})$.

Now notice that $\mu(\Mat{x})$ amounts to a multi-class classifier, which maps the input $\Mat{x}$ to one of the sparse patterns $\Set{M}(2m)$.
Define $\mu_1, \cdots, \mu_{\Gamma}$ which shatters all the possible sparse patterns produced by $2m$ data samples, then we have 
\begin{align}
\Prob \left(\sup_{f \in \Set{F}} \left\lvert \LS(f) - \LSp(f) \right\rvert \ge \frac{\epsilon}{2} \right) &= \Prob \left(\sup_{a \in \Set{A}} \sup_{\substack{h_j \in \Set{H}, \\ \forall j \in [T]}} \left\lvert \LS(f) - \LSp(f) \right\rvert \ge \frac{\epsilon}{2} \right) \\
&\le \Prob \left(\sup_{\mu \in \{\mu_1, \cdots, \mu_{\Gamma}\}} \sup_{\substack{h_j \in \Set{H}, \forall j \in [T] \\ \nu \in \Set{V}\lvert_{\mu}}} \left\lvert \LS(f) - \LSp(f) \right\rvert \ge \frac{\epsilon}{2} \right) \label{eqn:discrete_mask_space} \\
&\le \sum_{t = 1}^{\Gamma} \Prob \left( \left. \sup_{\substack{h_j \in \Set{H}, \forall j \in [T] \\ \nu \in \Set{V}\lvert_{\mu}}} \left\lvert \LS(f) - \LSp(f) \right\rvert \ge \frac{\epsilon}{2} \right| \mu = \mu_t \right) \\
&\le \Gamma \sup_{\mu^* \in \{\mu_1, \cdots, \mu_{\Gamma}\}} \Prob \left( \left. \sup_{\substack{h_j \in \Set{H}, \forall j \in [T] \\ v \in \Set{V}\lvert_{\mu}}} \left\lvert \LS(f) - \LSp(f) \right\rvert \ge \frac{\epsilon}{2} \right| \mu = \mu^* \right) \\
&\le 2 \Gamma \sup_{\mu^* \in \{\mu_1, \cdots, \mu_{\Gamma}\}} \Prob \left( \left. \sup_{\substack{h_j \in \Set{H}, \forall j \in [T] \\ v \in \Set{V}\lvert_{\mu}}} \LS(f) - \LSp(f) \ge \frac{\epsilon}{2} \right| \mu = \mu^* \right) \label{eqn:remove_abs},
\end{align}
where we apply union bound to obtain Eq. \ref{eqn:discrete_mask_space} and Eq. \ref{eqn:remove_abs}.
Next, we do counting to bound $\Gamma$. Since $\mu$ is essentially a multi-class classifier with $T \choose k$ classes, by Lemma \ref{lem:nataranjan}, we plug in the Natarajan dimension of our sparse patterns:
\begin{align} \label{eqn:growth_bound}
\Gamma \le {T \choose k}^{2Nd_N} \cdot (2m)^{Nd_N}.
\end{align}
On the other hand, we bound remaining probability term by examining the expectation given a fixed masking function $\mu$. Define function $\phi\lvert_{\mu}$ conditioned on $\mu$ as:
\begin{align} \label{eqn:phi_def}
\phi\lvert_{\mu}(\Set{S}, \Set{S'}) = \sup_{\substack{h_j \in \Set{H}, \forall j \in [T] \\ \nu \in \Set{V}\lvert_{\mu}}} \LS(f) - \LSp(f)    
\end{align}
By Lemma \ref{lem:self_bounding} and Mcdiarmid's inequality, for any $\mu$, we have bound:
\begin{align}
\Prob \left( \phi\lvert_{\mu}(\Set{S}, \Set{S'}) \ge \frac{\epsilon}{2} \right) &= \Prob \left( \phi\lvert_{\mu}(\Set{S}, \Set{S'}) - \mean\left[ \phi\lvert_{\mu}(\Set{S}, \Set{S'}) \right] \ge \frac{\epsilon}{2} - \mean\left[ \phi\lvert_{\mu}(\Set{S}, \Set{S'}) \right] \right) \\
&\le \exp\left( -2m\left(\frac{\epsilon}{2} - \mean\left[ \phi\lvert_{\mu}(\Set{S}, \Set{S'}) \right]\right)^2 \right) \label{eqn:mcdiarmid_bound}
\end{align}
Combined with Eq. \ref{eqn:remove_abs}, \ref{eqn:growth_bound}, \ref{eqn:mcdiarmid_bound}, we state, with probability at least $1 - \delta$,
\begin{align}
\phi\lvert_{\mu}(\Set{S}, \Set{S'}) &\le 2 \mean\left[ \phi\lvert_{\mu}(\Set{S}, \Set{S'}) \right] + 2\sqrt{\frac{\log\left({T \choose k}^{2 N d_N} (2m)^{N d_N} \right) + \log(2 / \delta)}{2m}} \\
&= 2\mean\left[ \phi\lvert_{\mu}(\Set{S}, \Set{S'}) \right] + 2\sqrt{\frac{2 k N d_N \log T + Nd_N \log(2m) + \log(2 / \delta)}{2m}}
\end{align}
We conclude the proof by the bounding $\mean\left[ \phi\lvert_{\mu}(\Set{S}, \Set{S'}) \right] \le 2C \radem_m(\Set{H})$ using Lemma \ref{lem:bound_mean}.
\end{proof}

\begin{lemma} \label{lem:bound_mean}
Consider C-Lipschitz loss function: $\ell: \Set{Y} \times \real \rightarrow \real$, and follow the definition of $\phi\lvert_{\mu}$ in Eq. \ref{eqn:phi_def}, we have
\begin{align}
\mean\left[ \phi\lvert_{\mu}(\Set{S}, \Set{S'}) \right] \le 2C \radem_m(\Set{H})
\end{align}
\end{lemma}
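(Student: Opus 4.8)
The plan is to bound $\mean[\phi\lvert_{\mu}(\Set{S},\Set{S'})]$ by the standard two-step recipe of symmetrization followed by contraction, and then to collapse the mixture structure back onto a single expert class. The key structural observation is that the mask $\mu$ is held fixed throughout $\phi\lvert_{\mu}$, so the only remaining freedom in $f$ is the choice of the $T$ experts $h_j \in \Set{H}$ together with the normalized weights $\nu \in \Set{V}\lvert_{\mu}$; crucially, the effective weights $a(\Mat{x}) = \mu(\Mat{x}) \odot \nu(\Mat{x})$ are nonnegative and sum to one, so $f(\Mat{x})$ is a convex combination of the expert outputs at every point.

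First I would symmetrize. Because $\Set{S}$ and $\Set{S'}$ are i.i.d. copies, each matched pair $(\Mat{x}_i,y_i),(\Mat{x}'_i,y'_i)$ is exchangeable, so inserting Rademacher signs $\sigma_i$ and then discarding the ghost sample in the usual way gives $\mean[\phi\lvert_{\mu}] \le 2\,\mean_{\sigma,\Set{S}}[\sup \frac{1}{m}\sum_{i} \sigma_i \ell(f(\Mat{x}_i),y_i)]$, i.e.\ twice the empirical Rademacher complexity of the loss-composed class $\ell \circ \Set{F}(T,k)\lvert_{\mu}$. Next I would peel off the loss using the Ledoux--Talagrand contraction lemma: Assumption~\ref{ass:loss} makes $\ell(\cdot,y)$ $C$-Lipschitz in its first argument for every fixed $y$, so the composition costs only a factor $C$, leaving $\mean[\phi\lvert_{\mu}] \le 2C\,\radem_m(\Set{F}(T,k)\lvert_{\mu})$. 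Both steps are routine and carry no dependence on $T$, $k$, or $N$, which is exactly why the mixture-specific combinatorics are absorbed into $\Gamma$ rather than here.

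The remaining and only substantive step is to show $\radem_m(\Set{F}(T,k)\lvert_{\mu}) \le \radem_m(\Set{H})$. The natural tool is the convex-hull invariance of Rademacher complexity: for fixed, data-independent convex coefficients one factors the weights out of $\frac{1}{m}\sum_i \sigma_i \sum_j a_j h_j(\Mat{x}_i) = \sum_j a_j\,(\frac{1}{m}\sum_i \sigma_i h_j(\Mat{x}_i))$ and places all mass on the single best expert, collapsing the supremum over the probability simplex into a supremum over one $h \in \Set{H}$; taking $\mean_\sigma$ and then $\mean_{\Set{S}}$ returns exactly $\radem_m(\Set{H})$. I would therefore argue that, with $\mu$ frozen and the softmax normalization enforcing $\sum_j a(\Mat{x})_j = 1$, each $f$ lies in the convex hull of $\{h_1,\dots,h_T\} \subseteq \Set{H}$, so its Rademacher average cannot exceed that of the base class.

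The hard part is precisely this last step, because the routing weights $\nu(\Mat{x})$ are \emph{data-dependent}: the clean factoring above requires the coefficients to be constant across the sample, whereas a pointwise-varying convex combination can in principle select a different expert at each $\Mat{x}_i$ and so behave like a maximum over experts, which is \emph{not} controlled by $\radem_m(\Set{H})$ alone. Closing this gap is where the care lies --- one must lean on the fact that $\mu$, and hence the combinatorial selection of which experts are active, is already fixed by the conditioning, so that only the benign simplex weights $\nu$ remain free, and then verify that jointly optimizing these weights with the shared experts does not inflate the complexity beyond that of a single member of $\Set{H}$. I expect this convexity/data-dependence argument, rather than the symmetrization or contraction bookkeeping, to be the genuine crux of the lemma.
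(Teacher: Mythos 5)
Your proposal follows the paper's proof architecture step for step: symmetrization against the ghost sample (the paper's Lemma~\ref{lem:radem_bound}), Lipschitz contraction to strip the loss at the cost of a factor $C$ (Lemma~\ref{lem:lips_radem}), and a final collapse of the expert mixture onto a single element of $\Set{H}$ (Lemma~\ref{lem:convex_hull_radem}). The difference is that you do not actually carry out the third step: you observe, correctly, that the convex-hull collapse is only licensed for coefficients that are constant across the sample, declare the data-dependent weights $\nu(\Mat{x})$ to be ``the genuine crux,'' and then leave that crux unresolved. That is a genuine gap, not a stylistic omission: the entire content of the lemma is the claim that jointly optimizing $(h_1,\dots,h_T,\nu)$ costs no more than $\radem_m(\Set{H})$, so a proof that defers exactly this point has not proved the lemma.

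What makes your write-up noteworthy is that the step you refuse to wave through is precisely the step the paper does wave through. The paper's Eq.~\ref{eqn:relax_simplex} asserts that the supremum over data-dependent weights $\mu(\Mat{x}_i)_j \nu(\Mat{x}_i)_j$ is upper bounded by a supremum over a \emph{single fixed} simplex vector $\Mat{\lambda}$ applied uniformly to all $m$ points, after which Lemma~\ref{lem:convex_hull_radem} applies. But for fixed experts and fixed $(\Set{S},\Mat{\sigma})$, pointwise-varying simplex weights dominate any fixed $\Mat{\lambda}$: if $\Set{V}\lvert_{\mu}$ is rich enough to place all mass at each $\Mat{x}_i$ on $\argmax_{j:\,\mu(\Mat{x}_i)_j = 1}\, \sigma_i h_j(\Mat{x}_i)$, the left-hand side of Eq.~\ref{eqn:relax_simplex} equals $\frac{1}{m}\sum_{i=1}^{m}\max_{j}\,\sigma_i h_j(\Mat{x}_i)$, which is at least --- and generically strictly larger than --- $\max_j \frac{1}{m}\sum_{i=1}^{m}\sigma_i h_j(\Mat{x}_i)$, the value of the right-hand side; the claimed ``relaxation'' is actually a restriction. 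The issue persists even in the trivial case $k=1$, where $\nu$ is forced and only the fixed mask routes: the class $\Set{F}\lvert_{\mu}$ then stitches different experts onto different cells of the partition induced by $\mu$, and with $\Set{H}=\{h,-h\}$, $\lvert h\rvert \equiv 1$, its Rademacher average decouples into a sum of per-cell terms of order $\sqrt{T}\,\radem_m(\Set{H})$ rather than $\radem_m(\Set{H})$. So the inequality the paper needs runs the wrong way absent extra assumptions on $\Set{V}\lvert_{\mu}$ (or an added router-complexity term here, rather than only in the counting factor $\Gamma$ of the main theorem). In short: your instinct about where the difficulty lives is exactly right and in fact exposes a weakness in the paper's own argument, but because you neither supply the missing argument nor restrict the weight class so that the collapse becomes valid, your proposal is incomplete at the very point the lemma exists to settle.
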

\begin{proof}
For the sake of notation simplicity, we define a function space conditioned on a masking function $\mu$:
\begin{align}
\Set{F}\lvert_{\mu} = \left\{ f(\Mat{x}) = \sum_{j=1}^{T} \mu(\Mat{x})_j \nu(\Mat{x})_j h_j(\Mat{x}) : h_1, \cdots, h_T \in \Set{H}, \nu \in \Set{V}\lvert_{\mu} \right\}
\end{align}
We denote the loss function $\ell$ composed on $\Set{F}\lvert_{\mu}$ as $\ell \circ \Set{F}\lvert_{\mu} = \left\{ \ell(f(\Mat{x})) : f \in \Set{F}\lvert_{\mu} \right\}$.
By Lemma \ref{lem:radem_bound},
\begin{align}
\mean \left[ \phi\lvert_{u}(\Set{S}, \Set{S'}) \right] &= \mean \left[ \sup_{\ell_f \in \ell \circ \Set{F}\lvert_{\mu}} \left( \frac{1}{m} \sum_{i=1}^{m} \ell_f(\Mat{x}_i) - \frac{1}{m} \sum_{i=1}^{m} \ell_f(\Mat{x'}_i) \right) \right] \\
&\le 2\radem_m(\ell \circ \Set{F}\lvert_{\mu}) \label{eqn:E_radem_bound}
\end{align}
Since $\ell$ is Lipschitz function, by Lemma \ref{lem:lips_radem}, we have
\begin{align} \label{eqn:lips_radem}
\radem_m(\ell \circ \Set{F}\lvert_{\mu}) \le C \radem_m(\Set{F}\lvert_{\mu}) 
\end{align}
Afterwards, we bound $\radem_m(\Set{F}\lvert_{\mu})$ by:
\begin{align}
\mean_{\Set{S}, \Mat{\sigma}} \left[ \frac{1}{m} \sup_{f \in \Set{F}\lvert_{\mu}} \sum_{i=1}^{m} \sigma_i f(\Mat{x}_i) \right] &= \mean_{\Set{S}, \Mat{\sigma}} \left[ \sup_{\sup_{\substack{h_j \in \Set{H}, \forall j \in [T] \\ v \in \Set{V}\lvert_{\mu}}}} \frac{1}{m} \sum_{i=1}^{m} \sigma_i \sum_{j=1}^{T} \mu(\Mat{x}_i)_j \nu(\Mat{x}_i)_j h_j(\Mat{x}_i) \right] \\
&\le \mean_{\Set{S}, \Mat{\sigma}} \left[ \sup_{\substack{h_j \in \Set{H}, \forall j \in [T] \\ \Mat{\lambda} \in \real_+^T, \lVert \Mat{\lambda} \rVert_1 = 1}} \frac{1}{m} \sum_{i=1}^{m} \sigma_i \sum_{j=1}^{T} \Mat{\lambda}_j h_j(\Mat{x}_i) \right] \label{eqn:relax_simplex}\\
&= \radem_m(\Set{H}) \label{eqn:conv_radem},
\end{align}
where we notice that $\sum_{j=1}^{T} \mu(\Mat{x})_j \nu(\Mat{x}_j) = 1$ due to the softmax normalization over the weights of selected experts, then Eq. \ref{eqn:relax_simplex} can be relaxed by supremum over all simplex. The last equation follows from Lemma \ref{lem:convex_hull_radem}.
Now we can conclude the proof by combining Eq. \ref{eqn:E_radem_bound}, \ref{eqn:lips_radem}, and \ref{eqn:conv_radem}.
\end{proof}

\begin{lemma}[Natarajan Lemma \citep{natarajan1989learning}] \label{lem:nataranjan}
Given a set of finite data points $\Set{S}$ with $\lvert \Set{S} \rvert = m$, and a hypothesis space $\Set{H}$ of functions $\Set{S} \rightarrow [k]$ with Natarajan dimension $d_N$, then the growth function is bounded by:
\begin{align}
\tau_{\Set{H}}(m) \le m^{d_N} \cdot k^{2d_N}
\end{align}
\end{lemma}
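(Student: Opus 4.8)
The plan is to prove this as the classical multiclass analogue of the Sauer--Shelah lemma, following Natarajan's inductive "shifting" argument. First I would reduce the growth-function statement to a pointwise counting bound: writing $\Set{H}|_C$ for the set of distinct restrictions $h|_C$ of hypotheses $h \in \Set{H}$ to a finite set $C$, we have $\tau_{\Set{H}}(m) = \max_{|C| = m} |\Set{H}|_C|$, so it suffices to show $|\Set{H}|_C| \le m^{d_N} k^{2 d_N}$ for every $C$ with $|C| = m$, where $d_N = \mathrm{NDim}(\Set{H})$. I would prove this by induction on $m$, carrying the Natarajan dimension as a parameter and using that the bound $m^{d_N} k^{2 d_N}$ is nondecreasing in $d_N$.

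For the inductive step I would fix a point $x_0 \in C$, set $C' = C \setminus \{x_0\}$, and for each $g \in \Set{H}|_{C'}$ let $V(g) \subseteq [k]$ be the set of values $h(x_0)$ achievable by hypotheses $h$ with $h|_{C'} = g$. Then $|\Set{H}|_C| = \sum_{g} |V(g)| = |\Set{H}|_{C'}| + \sum_{g}(|V(g)| - 1)$. To control the surplus I would introduce, for each unordered label pair $\{i,j\} \subseteq [k]$ with $i \ne j$, the auxiliary class
\[
\Set{H}_{ij} = \{\, g \in \Set{H}|_{C'} : \exists\, h_1, h_2 \in \Set{H},\ h_1|_{C'} = h_2|_{C'} = g,\ h_1(x_0) = i,\ h_2(x_0) = j \,\},
\]
and use the elementary count $|V(g)| - 1 \le \binom{|V(g)|}{2}$ together with $\sum_{i<j}|\Set{H}_{ij}| = \sum_g \binom{|V(g)|}{2}$ to obtain the decomposition $|\Set{H}|_C| \le |\Set{H}|_{C'}| + \sum_{i<j} |\Set{H}_{ij}|$.

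The crux is the dimension-drop claim: if $\Set{H}_{ij}$ shatters a set $B \subseteq C'$ in the sense of Definition \ref{dfn:natarajan_dim}, then $\Set{H}$ shatters $B \cup \{x_0\}$, obtained by extending the two witnessing functions $f_0, f_1$ on $B$ with $f_0(x_0) = i$, $f_1(x_0) = j$ and splicing the hypotheses $h_1, h_2$ guaranteed by membership in $\Set{H}_{ij}$. This forces $\mathrm{NDim}(\Set{H}_{ij}) \le d_N - 1$. Applying the inductive hypothesis on the $(m-1)$-point set $C'$ gives $|\Set{H}|_{C'}| \le (m-1)^{d_N} k^{2 d_N}$ and $|\Set{H}_{ij}| \le (m-1)^{d_N - 1} k^{2(d_N - 1)}$; since there are $\binom{k}{2} \le k^2/2$ pairs, summing and invoking the elementary inequality $(m-1)^{d_N} + (m-1)^{d_N - 1} = m\,(m-1)^{d_N - 1} \le m^{d_N}$ closes the induction and reproduces exactly the factor $k^{2 d_N}$.

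I expect the main obstacle to be the correct formulation of the auxiliary classes $\Set{H}_{ij}$ and the verification of the dimension-drop claim, since this is precisely where the multiclass setting departs from the binary case: shattering is witnessed by a \emph{pair} of label functions rather than a single bit, and it is the $\binom{k}{2}$ choices of witnessing pair accumulated across the $d_N$ levels of the recursion that generate the $k^{2 d_N}$ factor. A secondary nuisance is dispatching the base cases cleanly to avoid ill-defined $0^0$ terms: I would treat $m = 1$ directly, noting $|\Set{H}|_C| \le k \le k^{2 d_N}$ when $d_N \ge 1$ and $|\Set{H}|_C| = 1 = m^{0} k^{0}$ when $d_N = 0$, so that the inductive step is only invoked for $m \ge 2$ on ground sets of size $m - 1 \ge 1$.
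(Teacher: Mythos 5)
Your proof is correct, but it is not the paper's route: the paper supplies no argument at all for this lemma, deferring entirely to the citation of \citet{natarajan1989learning}, whereas you reconstruct the classical Sauer--Shelah-style induction behind that citation. Your decomposition is sound: the surplus count $\lvert \Set{H}\rvert_C\rvert = \lvert\Set{H}\rvert_{C'}\rvert + \sum_g(\lvert V(g)\rvert - 1)$, the bound $\lvert V(g)\rvert - 1 \le \binom{\lvert V(g)\rvert}{2} = \sum_{i<j}\mathds{1}\{g \in \Set{H}_{ij}\}$, and the dimension-drop claim all check out. In particular the splicing step is valid under the paper's Definition \ref{dfn:natarajan_dim}: the extended witnesses satisfy $f_0 \ne f_1$ pointwise on $B \cup \{x_0\}$ (on $B$ by hypothesis, at $x_0$ since $i \ne j$), and for each $B' \subseteq B \cup \{x_0\}$ the required hypothesis is obtained by first selecting $g \in \Set{H}_{ij}$ realizing the pattern on $B' \cap B$ and then choosing $h_1$ or $h_2$ according to whether $x_0 \in B'$; so $\mathrm{NDim}(\Set{H}_{ij}) \le d_N - 1$, and the arithmetic $(m-1)^{d_N} + (m-1)^{d_N-1} = m(m-1)^{d_N-1} \le m^{d_N}$ closes the induction with the $k^{2d_N}$ factor emerging, as you say, from the $\binom{k}{2}$ witness pairs accumulated over the recursion depth. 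Two small points to tighten in a full writeup: (i) state explicitly that restriction to $C'$ does not increase the Natarajan dimension, so the inductive hypothesis applies to $\Set{H}\rvert_{C'}$ at dimension $d_N$ (using monotonicity of $m^{d}k^{2d}$ in $d$); and (ii) in the inductive step with $d_N = 0$, note that every $\Set{H}_{ij}$ is empty (otherwise $\{x_0\}$ would be shattered), so the ill-formed exponent $d_N - 1$ never arises. What each approach buys is clear: the paper's citation keeps the appendix short, while your argument makes the bound self-contained and exposes exactly where the $k^{2d_N}$ factor comes from --- something the bare citation hides.
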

\begin{proof}
See \cite{natarajan1989learning}.
\end{proof}

\begin{lemma}[Ghost Sampling \citep{shalev-shwartz_ben-david_2022}] \label{lem:ghost_sample}
Given $\Set{S}$ and $\Set{S^\prime}$ with $\left | \Set{S} \right |= \left | \Set{S^\prime} \right | = m$, we have the following inequality for any hypothesis space $\Set{H}$:
\begin{align}
\left(1-2 e^{-\frac{1}{2} \epsilon^2 m}\right) \Prob\left[\sup _{h \in \Set{H}}\left|\LS(h)-\LSp(h)\right|>\epsilon\right] \leq \Prob\left[\sup _{h \in \mathcal{H}}\left|\LS(h)-\LSp(h)\right|>\frac{\epsilon}{2}\right]
\end{align}
\end{lemma}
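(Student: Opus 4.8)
The plan is to recognize Lemma \ref{lem:ghost_sample} as the classical symmetrization (``ghost sample'') inequality, whose only nontrivial content is to control the empirical-to-population gap $\sup_{h\in\Set{H}} |\LS(h) - \LD(h)|$ by the empirical-to-ghost gap $\sup_{h\in\Set{H}} |\LS(h) - \LSp(h)|$, with the contraction factor $1 - 2e^{-\epsilon^2 m/2}$ arising from a single application of Hoeffding's inequality to the bounded loss. First I would fix the primary sample $\Set{S}$ and work on the event $A \defeq \{\sup_{h} |\LS(h) - \LD(h)| > \epsilon\}$. By definition of the supremum, on $A$ there is a hypothesis $h_{\Set{S}} \in \Set{H}$, depending only on $\Set{S}$, with $|\LS(h_{\Set{S}}) - \LD(h_{\Set{S}})| > \epsilon$.

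The crucial step is to bring in the independent ghost sample $\Set{S'} \sim \Set{D}^m$ and argue that, for the now-frozen hypothesis $h_{\Set{S}}$, the ghost empirical loss concentrates around the population loss. Since $\ell \in [0,1]$ by Assumption \ref{ass:loss}, $\LSp(h_{\Set{S}})$ is an average of $m$ i.i.d.\ bounded terms with conditional mean $\LD(h_{\Set{S}})$, so Hoeffding's inequality with threshold $\epsilon/2$ gives $\Prob_{\Set{S'}}[|\LSp(h_{\Set{S}}) - \LD(h_{\Set{S}})| \le \epsilon/2 \mid \Set{S}] \ge 1 - 2e^{-\epsilon^2 m/2}$. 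On the intersection of $A$ with this concentration event, the triangle inequality yields $|\LS(h_{\Set{S}}) - \LSp(h_{\Set{S}})| \ge |\LS(h_{\Set{S}}) - \LD(h_{\Set{S}})| - |\LD(h_{\Set{S}}) - \LSp(h_{\Set{S}})| > \epsilon - \epsilon/2 = \epsilon/2$, and hence $\sup_{h} |\LS(h) - \LSp(h)| > \epsilon/2$.

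I would then turn this pointwise implication into the claimed probability bound by conditioning, using that $\mathds{1}_A$ is a function of $\Set{S}$ alone so the inner Hoeffding estimate factors out cleanly:
\begin{align}
\Prob_{\Set{S}, \Set{S'}}\!\left[\sup_{h} |\LS(h) - \LSp(h)| > \tfrac{\epsilon}{2}\right]
&\ge \mean_{\Set{S}}\!\left[\mathds{1}_A \cdot \Prob_{\Set{S'}}\!\left[|\LSp(h_{\Set{S}}) - \LD(h_{\Set{S}})| \le \tfrac{\epsilon}{2} \,\middle|\, \Set{S}\right]\right] \nonumber \\
&\ge \left(1 - 2e^{-\epsilon^2 m/2}\right)\, \Prob_{\Set{S}}[A]. \nonumber
\end{align}
Recalling the definition of $A$, this is exactly the stated inequality, and it is precisely the form invoked at the start of the main proof, where the factor is taken $\ge 1/2$ (via $e^{-\epsilon^2 m/2} \le 1/4$) to produce the clean constant $2$.

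The main obstacle I anticipate is purely technical rather than conceptual: justifying that the data-dependent selection $h_{\Set{S}}$ is measurable, so that the conditional Hoeffding bound and the subsequent $\mean_{\Set{S}}[\cdot]$ step are legitimate. This is handled by a standard measurable-selection argument, or by first restricting to a countable dense subclass (or an $\epsilon/4$-net) and passing to the limit. I would also flag that the inequality is only meaningful when the left-hand deviation is measured against the \emph{population} loss $\LD$ (as used downstream); taken literally with $\LSp$ on both sides it would be vacuous, so I treat the $\LD$ version as the intended statement.
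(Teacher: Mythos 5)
Your proof is correct and takes essentially the same route as the paper's own: condition on the event that the empirical--population gap exceeds $\epsilon$, fix a witnessing hypothesis depending only on $\Set{S}$, apply Hoeffding's inequality to the ghost sample for that frozen hypothesis, and conclude via the triangle inequality and averaging over $\Set{S}$. You also rightly flag that the lemma as literally stated (with $\LSp$ on both sides) is a typo for the $\lvert \LS - \LD \rvert$ version; the paper's proof implicitly proves exactly that corrected statement.
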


\begin{proof}
We note that $\Prob \left[ \sup _{h \in \Set{H}} \left| \LS(h) - \LD(h)  \right| > \epsilon \right] > 0$, then

\begin{align}
    &\Prob \left[ \sup _{h \in \Set{H}} \left| \LS(h) - \LSp(h)  \right| > \frac{\epsilon}{2} \right] \\
    &\geq  \Prob \left[ \sup _{h \in \Set{H}} \left| \LS(h) - \LSp(h)  \right| > \frac{\epsilon}{2} \cap \sup _{h \in \Set{H}} \left| \LS(h) - \LD(h)  \right| > \epsilon \right]\\
    &= \Prob \left[ \sup _{h \in \Set{H}} \left| \LS(h) - \LSp(h)  \right| > \frac{\epsilon}{2} \right] \times \Prob \left[ \sup _{h \in \Set{H}} \left| \LS(h) - \LSp(h)  \right| > \frac{\epsilon}{2} \middle|  \sup _{h \in \Set{H}} \left| \LS(h) - \LD(h)  \right| > \epsilon \right]
\end{align}
Fix the dataset $\Set{S}$ for the event on which we are conditioning. Let $h^*$ be any hypothesis for which $\left| \LS(h) - \LD(h) \right| > \epsilon$, then:
\begin{align}
    &\Prob \left[ \sup _{h \in \Set{H}} \left| \LS(h) - \LSp(h)  \right| > \frac{\epsilon}{2} \middle|  \sup _{h \in \Set{H}} \left| \LS(h) - \LD(h)  \right| > \epsilon \right]\\
    &\geq \Prob \left[ \sup _{h \in \Set{H}} \left| \LS(h^*) - \LSp(h^*)  \right| > \frac{\epsilon}{2} \middle|  \sup _{h \in \Set{H}} \left| \LS(h) - \LD(h)  \right| > \epsilon \right]\\
    &\geq \Prob \left[ \sup _{h \in \Set{H}} \left| \LSp(h^*) - \LD(h^*)  \right| \leq \frac{\epsilon}{2} \middle|  \sup _{h \in \Set{H}} \left| \LS(h) - \LD(h)  \right| > \epsilon \right] \\
    &\geq 1-2 e^{-\frac{1}{2} \epsilon^2 m},
\end{align}
where the last inequality follows from Hoeffding's inequality. Following an averaging over $\Set{S}$ argument, we conclude that:
\begin{align}
\left(1-2 e^{-\frac{1}{2} \epsilon^2 m}\right) \Prob\left[\sup _{h \in \Set{H}}\left|\LS(h)-\LSp(h)\right|>\epsilon\right] \leq \Prob\left[\sup _{h \in \mathcal{H}}\left|\LS(h)-\LSp(h)\right|>\frac{\epsilon}{2}\right]
\end{align}
\end{proof}

\begin{lemma} \label{lem:radem_bound}
Given any funtion class $\Set{F}$, for any $\Set{S}$ and $\Set{S'}$ drawn i.i.d. from $\Set{D}^m$ with $|\Set{S}| = |\Set{S'}| = m$, it holds that
\begin{align}
\mean_{\Set{S}, \Set{S'}} \left[ \sup_{f \in \Set{F}} \left( \frac{1}{m} \sum_{i=1}^{m} f(\Mat{x}_i) - \frac{1}{m} \sum_{i=1}^{m} f(\Mat{x'}_i) \right) \right] \le 2 \radem_m(\Set{F})
\end{align}
\end{lemma}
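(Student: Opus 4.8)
The plan is to prove this by the classical \emph{symmetrization} argument, introducing Rademacher random signs by exploiting the fact that $\Set{S}$ and $\Set{S'}$ are drawn i.i.d. from the same distribution $\Set{D}^m$. First I would fix an arbitrary sign vector $\Mat{\sigma} \in \{-1, +1\}^m$ and observe that swapping $\Mat{x}_i \leftrightarrow \Mat{x'}_i$ in the $i$-th coordinate amounts to negating the $i$-th summand $f(\Mat{x}_i) - f(\Mat{x'}_i)$. Because $(\Mat{x}_i, \Mat{x'}_i)$ is an exchangeable pair, this coordinate swap leaves the joint distribution of $(\Set{S}, \Set{S'})$ invariant, and hence does not change the expectation. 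Applying this coordinate-wise for each $i$ shows that for \emph{every} fixed $\Mat{\sigma}$,
\begin{align}
\mean_{\Set{S}, \Set{S'}} \left[ \sup_{f \in \Set{F}} \frac{1}{m} \sum_{i=1}^{m} \left( f(\Mat{x}_i) - f(\Mat{x'}_i) \right) \right] = \mean_{\Set{S}, \Set{S'}} \left[ \sup_{f \in \Set{F}} \frac{1}{m} \sum_{i=1}^{m} \sigma_i \left( f(\Mat{x}_i) - f(\Mat{x'}_i) \right) \right].
\end{align}
Since the left-hand side does not depend on $\Mat{\sigma}$, I would then take the expectation of both sides over $\Mat{\sigma}$ drawn uniformly from $\{-1, +1\}^m$, which introduces the Rademacher variables on the right without changing the common value.

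Next, I would use the subadditivity of the supremum, $\sup_f (A_f + B_f) \le \sup_f A_f + \sup_f B_f$, to split the single supremum into two:
\begin{align}
&\mean_{\Set{S}, \Set{S'}, \Mat{\sigma}} \left[ \sup_{f \in \Set{F}} \frac{1}{m} \sum_{i=1}^{m} \sigma_i \left( f(\Mat{x}_i) - f(\Mat{x'}_i) \right) \right] \\
&\qquad \le \mean_{\Set{S}, \Mat{\sigma}} \left[ \sup_{f \in \Set{F}} \frac{1}{m} \sum_{i=1}^{m} \sigma_i f(\Mat{x}_i) \right] + \mean_{\Set{S'}, \Mat{\sigma}} \left[ \sup_{f \in \Set{F}} \frac{1}{m} \sum_{i=1}^{m} (-\sigma_i) f(\Mat{x'}_i) \right].
\end{align}
Because $-\Mat{\sigma}$ has the same distribution as $\Mat{\sigma}$, and both $\Set{S}$ and $\Set{S'}$ share the distribution $\Set{D}^m$, each of the two terms is exactly $\radem_m(\Set{F})$ by Definition \ref{dfn:radem_complexity}. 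This yields the factor of two and completes the bound.

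The only delicate point is the first step: justifying that introducing the signs $\sigma_i$ leaves the expectation unchanged. The hard part is therefore the exchangeability argument — carefully verifying that relabeling $\Mat{x}_i$ and $\Mat{x'}_i$ is measure-preserving on the product space $\Set{D}^{2m}$, so that the identity holds for each fixed $\Mat{\sigma}$ before averaging over $\Mat{\sigma}$. Everything afterward is routine: the union-of-suprema split and the identification of each half with $\radem_m(\Set{F})$ via a change of sign variable.
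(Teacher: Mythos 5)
Your proposal is correct and follows essentially the same route as the paper's proof: the classical symmetrization argument that introduces Rademacher signs via the measure-preserving coordinate swaps between $\Set{S}$ and $\Set{S'}$, splits the supremum by subadditivity, and identifies each resulting term with $\radem_m(\Set{F})$ using the sign-symmetry of $\Mat{\sigma}$. The only difference is that you spell out the exchangeability justification explicitly, whereas the paper asserts that step implicitly when inserting $\mean_{\sigma_i}$ into its chain of inequalities.
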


\begin{proof}
The proof is concluded by the following derivation:
\begin{align}
    \mean \left[ \sup_{f \in \Set{F}} \LS(f) - \LSp(f) \right]
    &= \mean_{\Set{S}} \left[ \mean_{\Set{S^\prime}} \left[\sup_{f \in \Set{F}} \left\lvert \LS(f) - \LSp(f) \right\rvert  \right]   \right]\\
    &\leq \mean_{\Set{S},\Set{S^\prime}} \left[ \mean_{\sigma_{i}} \left[ \sup_{f \in \Set{F}} \left(\frac{1}{m} \sum_{i=1}^m \left(f(z_i) - \frac{1}{m}\sum_{i=1}^m f(z_i^\prime) \right) \right) \right]  \right] \label{eqn:intro_radem_vars} \\
    &\leq \mean_{\Set{S}, \Set{S^\prime}, \sigma_i} \left[ \sup _{f \in \Set{F}}\left(\frac{1}{m} \sum_{i=1}^m \sigma_i f\left(z_i\right)\right)+\sup _{f \in \Set{F}} \left(\frac{1}{m} \sum_{i=1}^m-\sigma_i f\left(z_i^{\prime}\right)\right) 
    \right]\\
    &= 2 \radem_m(\Set{F}),
\end{align}
where we introduce Radamacher random variables $\sigma_i, i = 1, \cdots, m$ in Eq. \ref{eqn:intro_radem_vars}.
\end{proof}

\begin{lemma} \label{lem:lips_radem}
Suppose $\Set{H} \subseteq \{h: \Set{X} \rightarrow \Set{Y} \}$ and function $\ell: \Set{Y} \times \real \rightarrow \real$ is a C-Lipschitz function, define define $\ell \circ \Set{H} = \{\ell \circ h: \forall h \in \Set{H}\}$, then $\radem_m(\ell \circ \Set{H}) \le C\radem_m(\Set{H})$.
\end{lemma}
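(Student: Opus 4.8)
The plan is to establish the classical Talagrand contraction (Ledoux--Talagrand) principle by a one-coordinate-at-a-time peeling argument. First I would observe that it suffices to prove the inequality for the \emph{empirical} Rademacher complexity $\radem_m(\cdot, \Set{S})$ with a fixed sample $\Set{S} = \{z_1, \dots, z_m\}$, since taking the expectation over $\Set{S} \sim \Set{D}^m$ on both sides then recovers the stated bound for $\radem_m(\cdot)$. Unfolding the definition, the goal reduces to
\begin{align*}
\mean_{\Mat{\sigma}} \sup_{h \in \Set{H}} \frac{1}{m} \sum_{i=1}^m \sigma_i \, \ell(h(z_i)) \le C \, \mean_{\Mat{\sigma}} \sup_{h \in \Set{H}} \frac{1}{m} \sum_{i=1}^m \sigma_i \, h(z_i).
\end{align*}

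The heart of the argument is a single-coordinate contraction. Isolating the $m$-th term, absorbing the remaining sum into an arbitrary function $u(h) = \sum_{i<m} \sigma_i \ell(h(z_i))$, I would condition on $\sigma_1, \dots, \sigma_{m-1}$ and take the expectation over $\sigma_m \in \{-1, +1\}$ alone. Expanding this two-point expectation yields
\begin{align*}
\mean_{\sigma_m} \sup_{h} \bigl( u(h) + \sigma_m \ell(h(z_m)) \bigr) = \tfrac{1}{2} \sup_{h_1, h_2} \bigl( u(h_1) + u(h_2) + \ell(h_1(z_m)) - \ell(h_2(z_m)) \bigr).
\end{align*}
Then I would invoke the $C$-Lipschitz property of $\ell$ to bound $\ell(h_1(z_m)) - \ell(h_2(z_m)) \le C\lvert h_1(z_m) - h_2(z_m) \rvert$, and exploit the fact that the supremum ranges symmetrically over the ordered pair $(h_1, h_2)$ to replace the absolute value by $C\bigl(h_1(z_m) - h_2(z_m)\bigr)$ without loss. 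Folding the two-point expectation back up recovers exactly $\mean_{\sigma_m} \sup_h \bigl( u(h) + C \sigma_m h(z_m) \bigr)$, completing the single-coordinate step.

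Finally, I would iterate this contraction across the coordinates $i = 1, \dots, m$: at each stage the already-contracted terms $C \sigma_i h(z_i)$ and the not-yet-treated terms $\sigma_j \ell(h(z_j))$ are all absorbed into the role of $u(h)$, so the single-coordinate inequality applies verbatim (pointwise for each frozen configuration of the remaining signs, then in expectation). After $m$ applications every $\ell$ has been stripped off at the cost of a single overall factor $C$, giving the claim. The main obstacle is precisely the step that removes the absolute value inside the single-coordinate lemma: one must argue carefully that, because the supremum is taken over \emph{both} orderings of $(h_1, h_2)$, bounding $\ell(h_1(z_m)) - \ell(h_2(z_m))$ by $C\lvert h_1(z_m) - h_2(z_m)\rvert$ and then dropping the absolute value loses nothing. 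This symmetry is where the sign-symmetric structure of the Rademacher variables is essential, and it is the crux of the contraction principle.
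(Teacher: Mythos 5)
Your proposal is correct, but it takes a genuinely different route from the paper for the simple reason that the paper does not prove this lemma at all: its entire ``proof'' is a citation to \citet{meir_zhang_2003}, treating the Lipschitz composition property as a known black box. What you supply instead is a self-contained proof of the classical Ledoux--Talagrand contraction principle: reduce to the empirical Rademacher complexity for a fixed sample, perform the two-point expansion of the expectation over a single Rademacher sign $\sigma_m$, bound $\ell(h_1(z_m)) - \ell(h_2(z_m)) \le C\lvert h_1(z_m) - h_2(z_m)\rvert$ by Lipschitzness, use the symmetry of the pair $(h_1, h_2)$ to discard the absolute value, fold the two-point expectation back up, and iterate over coordinates. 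This is exactly the standard argument (cf.\ Lemma 26.9 of \citet{shalev-shwartz_ben-david_2022}), and you correctly identify the crux --- the symmetric-supremum step that removes the absolute value --- which is the only place the argument could silently fail. Two small points worth noting: first, your proof works with the paper's one-sided definition of Rademacher complexity (no absolute value inside the supremum, cf.\ Definition \ref{dfn:radem_complexity}), which is precisely the version for which the contraction holds without extra factors, so the details align; second, the lemma as stated writes $\ell$ as a two-argument function $\Set{Y} \times \real \rightarrow \real$ while the composition $\ell \circ \Set{H}$ only makes sense with the label argument frozen --- your treatment of $\ell$ as a one-argument $C$-Lipschitz map per coordinate is the correct reading, and in fact the iteration you describe handles the more general case where each coordinate carries a different Lipschitz function $\ell(\cdot, y_i)$, which is what the application in Lemma \ref{lem:bound_mean} actually needs. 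The trade-off is the usual one: the paper's citation is economical and standard practice, while your argument makes the appendix self-contained and exposes where the sign-symmetry of Rademacher variables is used.
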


\begin{proof}
    See \citet{meir_zhang_2003}.
\end{proof}

\begin{lemma} \label{lem:convex_hull_radem}
Suppose $\Set{H} \subseteq \{h: \Set{X} \rightarrow \Set{Y} \}$, then all functions constructed by convex combinations of $\Set{H}$ satisfies:
\begin{align}
\mean_{\Set{S}, \Mat{\sigma}} \left[ \sup_{\substack{h_j \in \Set{H}, \forall j \in [T] \\ \Mat{\lambda} \in \real_+^T, \lVert \Mat{\lambda} \rVert_1 = 1}} \frac{1}{m} \sum_{i=1}^{m} \sigma_i \sum_{j=1}^{T} \Mat{\lambda}_j h_j(\Mat{x}_i) \right] = \radem_m(\Set{H}).
\end{align}
\end{lemma}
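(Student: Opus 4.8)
The plan is to establish the identity by proving the two inequalities separately. The lower bound $\radem_m(\Set{H}) \le \mean_{\Set{S},\Mat{\sigma}}[\,\cdots\,]$ is immediate, since $\Set{H}$ embeds into the convex hull: taking $T=1$ (equivalently, choosing $\Mat{\lambda}$ to be a vertex $e_1$ of the simplex and letting $h_1$ range over $\Set{H}$) recovers exactly the defining expression of $\radem_m(\Set{H})$, so the supremum over the larger convex class can only be at least as large. The entire content of the lemma is therefore the reverse inequality, and the key observation driving it is that, once $\Set{S}$ and $\Mat{\sigma}$ are fixed, the quantity inside the supremum is an \emph{affine} function of the weight vector $\Mat{\lambda}$, and an affine function on the probability simplex attains its maximum at a vertex.

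Concretely, I would first fix the sample $\Set{S}$ and the Rademacher vector $\Mat{\sigma}$ and interchange the two finite summations:
\begin{align}
\frac{1}{m}\sum_{i=1}^{m}\sigma_i \sum_{j=1}^{T} \Mat{\lambda}_j h_j(\Mat{x}_i)
= \sum_{j=1}^{T} \Mat{\lambda}_j \, c_j,
\qquad c_j \defeq \frac{1}{m}\sum_{i=1}^{m}\sigma_i h_j(\Mat{x}_i).
\end{align}
For fixed experts $h_1,\dots,h_T$, maximizing $\sum_j \Mat{\lambda}_j c_j$ over $\Mat{\lambda}\in\real_+^T$ with $\lVert\Mat{\lambda}\rVert_1 = 1$ yields $\max_{j} c_j$, since placing all mass on the largest coordinate is optimal for a linear objective over the simplex. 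Hence the inner supremum collapses to
\begin{align}
\sup_{\substack{h_j \in \Set{H}, \forall j \in [T] \\ \Mat{\lambda} \in \real_+^T,\ \lVert \Mat{\lambda}\rVert_1 = 1}}
\sum_{j=1}^{T}\Mat{\lambda}_j c_j
= \sup_{h_1,\dots,h_T \in \Set{H}} \max_{j\in[T]} \frac{1}{m}\sum_{i=1}^{m}\sigma_i h_j(\Mat{x}_i).
\end{align}
Because each $h_j$ ranges independently over the \emph{same} class $\Set{H}$, the joint supremum over the tuple followed by a maximum over $j$ is identical to a single supremum over $\Set{H}$, i.e.\ it equals $\sup_{h\in\Set{H}} \frac{1}{m}\sum_{i=1}^{m}\sigma_i h(\Mat{x}_i)$. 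Taking the expectation over $\Set{S}$ and $\Mat{\sigma}$ then produces exactly $\radem_m(\Set{H})$, giving the upper bound and thus equality.

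I expect the only delicate point to be the justification of the vertex argument together with the collapse $\sup_{h_1,\dots,h_T}\max_j = \sup_h$; both are elementary but worth stating cleanly, as they are where the linearity of the objective and the common domain $\Set{H}$ are genuinely used. No measurability or supremum--expectation exchange subtleties arise here, since after conditioning on the finite sample $\Set{S}$ and the discrete vector $\Mat{\sigma}$ the optimization is purely finite-dimensional, and the outer expectation is taken only at the end.
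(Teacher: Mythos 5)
Your proof is correct and follows essentially the same route as the paper's: both exchange the order of summation so the objective becomes linear in $\Mat{\lambda}$, invoke the fact that a linear functional on the probability simplex is maximized at a vertex, and collapse the supremum over the tuple $(h_1,\dots,h_T)$ followed by the maximum over $j$ into a single supremum over $\Set{H}$. The only difference is presentational: you split the identity into a lower and an upper bound, whereas the paper writes one chain of equalities justified by the same vertex argument.
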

\begin{proof}
\begin{align}
\mean_{\Set{S}, \Mat{\sigma}} \left[ \sup_{\substack{h_j \in \Set{H}, \forall j \in [T] \\ \Mat{\lambda} \in \real_+^T, \lVert \Mat{\lambda} \rVert_1 = 1}} \frac{1}{m} \sum_{i=1}^{m} \sigma_i \sum_{j=1}^{T} \Mat{\lambda}_j h_j(\Mat{x}_i) \right] &= \mean_{\Set{S}, \Mat{\sigma}} \left[ \sup_{\substack{h_j \in \Set{H}, \\ \forall j \in [T]}} \sup_{\substack{\Mat{\lambda} \in \real_+^T, \\ \lVert \Mat{\lambda} \rVert_1 = 1}} \frac{1}{m} \sum_{j=1}^{T} \Mat{\lambda}_j \left(\sum_{i=1}^{m} \sigma_i h_j(\Mat{x}_i)\right) \right] \\
&= \mean_{\Set{S}, \Mat{\sigma}} \left[ \sup_{h_{j^*} \in \Set{H}} \frac{1}{m} \sum_{i=1}^{m} \sigma_i h_{j^*}(\Mat{x}_i) \right] \label{eqn:conv_selec} \\
&= \radem_m(\Set{H}),
\end{align}
where Eq. \ref{eqn:conv_selec} uses the fact that $\sum_{j=1}^{T} \Mat{\lambda}_j y_j \le \max_{j=1,\cdots,T} y_j$ for any convex coefficients $\Mat{\lambda}$. Moreover, the equality is achieved if and only if $\Mat{\lambda}_{j} = 1$ for $j = \argmax_{j=1,\cdots,T} y_j$ and $\Mat{\lambda}_{j} = 0$ otherwise.
\end{proof}

\begin{lemma} \label{lem:self_bounding}
For arbitrary loss function $\ell: \Set{Y} \times \real \rightarrow [0, 1]$, hypothesis space $\Set{H}$, and any $\Set{S}, \Set{S'} \subseteq \Set{X}$  $|\Set{S}| = |\Set{S'}| = m$, the function
\begin{align}
\phi(\Set{S}, \Set{S'}) = \sup_{h \in \Set{H}} \LS(f) - \LSp(f)
\end{align}
satisfies that
\begin{align}
\lvert \phi(\Set{S}, \Set{S'}) - \phi(\widehat{\Set{S}}, \widehat{\Set{S'}}) \rvert \le \frac{1}{m},
\end{align}
where $\widehat{\Set{S}}$ and $\widehat{\Set{S'}}$ changes one element either in $\Set{S}$ or $\Set{S'}$ but not both, i.e., if $\Set{S}$ is changed to $\widehat{\Set{S}}$, then $\Set{S'}$ remains unchanged, and vice versa.
\end{lemma}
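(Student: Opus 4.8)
The plan is to establish the bounded-differences (stability) property directly, exploiting the fact that $\phi$ is a supremum over $\Set{H}$ of a quantity that is an average of $2m$ separable, bounded terms. Concretely, for any fixed $f$ we have $\LS(f) - \LSp(f) = \frac{1}{m}\sum_{i=1}^m \ell(f(\Mat{x}_i), y_i) - \frac{1}{m}\sum_{i=1}^m \ell(f(\Mat{x}'_i), y'_i)$, so replacing a single training or ghost sample perturbs exactly one of these $2m$ summands while leaving the rest intact. This separability is the whole engine of the proof.

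First I would split into two symmetric cases according to whether the altered coordinate lies in $\Set{S}$ or in $\Set{S'}$; by symmetry it suffices to treat the case where $\Set{S}$ is replaced by $\widehat{\Set{S}}$ differing in exactly one element (say the $j$-th), while $\widehat{\Set{S'}} = \Set{S'}$ is untouched. For any fixed $f \in \Set{H}$, the inner objective then changes by precisely $\frac{1}{m}\big(\ell(f(\Mat{x}_j), y_j) - \ell(f(\widehat{\Mat{x}}_j), \widehat{y}_j)\big)$, since every other loss term and the entire ghost-sample average cancel. Because the loss is valued in $[0,1]$, this difference lies in $[-\tfrac{1}{m}, \tfrac{1}{m}]$ uniformly over all $f$.

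Next I would lift this per-$f$ estimate to a bound on $\phi$ via the elementary inequality $\lvert \sup_f A(f) - \sup_f B(f) \rvert \le \sup_f \lvert A(f) - B(f) \rvert$, applied with $A(f) = \LS(f) - \LSp(f)$ and $B(f) = L_{\widehat{\Set{S}}}(f) - \LSp(f)$. This gives $\lvert \phi(\Set{S}, \Set{S'}) - \phi(\widehat{\Set{S}}, \widehat{\Set{S'}}) \rvert \le \sup_f \frac{1}{m}\lvert \ell(f(\Mat{x}_j), y_j) - \ell(f(\widehat{\Mat{x}}_j), \widehat{y}_j) \rvert \le \frac{1}{m}$. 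The case in which the perturbed coordinate lies in $\Set{S'}$ is identical up to an overall sign and yields the same bound, completing the argument.

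There is essentially no hard obstacle here; the only point requiring a small amount of care is the supremum inequality, which itself follows because for every $f$ one has $A(f) \le B(f) + \sup_g \lvert A(g) - B(g) \rvert$, hence $\sup_f A(f) \le \sup_f B(f) + \sup_g \lvert A(g) - B(g) \rvert$, and symmetrically with the roles of $A$ and $B$ reversed. I would emphasize that the boundedness hypothesis $\ell \in [0,1]$ is exactly what controls the single perturbed term, and that no Lipschitz or continuity assumption on $\ell$ or structural assumption on $\Set{H}$ is needed — which is why the statement holds for an arbitrary hypothesis space.
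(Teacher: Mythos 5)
Your proof is correct and takes essentially the same approach as the paper's, which disposes of the lemma in one line by noting that since the loss lies in $[0,1]$, flipping a single sample can move the averaged objective by at most $1/m$. Your write-up simply makes explicit the two ingredients the paper leaves implicit: the exact per-$f$ cancellation of all but one summand, and the supremum-stability inequality $\lvert \sup_f A(f) - \sup_f B(f) \rvert \le \sup_f \lvert A(f) - B(f) \rvert$.
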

    
\begin{proof}
    Since $f\left(\cdot\right) \in[0,1]$, then it is obvious that $\lvert \phi(\Set{S}, \Set{S'}) - \phi(\widehat{\Set{S}}, \widehat{\Set{S'}}) \rvert \le \frac{1}{m}$ since we can flip at most one element.
\end{proof}

\end{document}